\begin{document}

\title{Parametric matroid of rough set}

\author{Yanfang Liu, William Zhu~\thanks{Corresponding author.
E-mail: williamfengzhu@gmail.com (William Zhu)}}
\institute{
Lab of Granular Computing\\
Zhangzhou Normal University, Zhangzhou 363000, China}



\date{\today}          
\maketitle

\begin{abstract}
Rough set is mainly concerned with the approximations of objects through an equivalence relation on a universe.
Matroid is a combinatorial generalization of linear independence in vector spaces.
In this paper, we define a parametric set family, with any subset of a universe as its parameter, to connect rough sets and matroids.
On the one hand, for a universe and an equivalence relation on the universe, a parametric set family is defined through the lower approximation operator.
This parametric set family is proved to satisfy the independent set axiom of matroids, therefore it can generate a matroid, called a parametric matroid of the rough set.
Three equivalent representations of the parametric set family are obtained.
Moreover, the parametric matroid of the rough set is proved to be the direct sum of a partition-circuit matroid and a free matroid.
On the other hand, since partition-circuit matroids were well studied through the lower approximation number, we use it to investigate the parametric matroid of the rough set.
Several characteristics of  the parametric matroid of the rough set, such as independent sets, bases, circuits, the rank function and the closure operator, are expressed by the lower approximation number.

\textbf{Keywords:}
rough set, matroid, partition-circuit matroid, the lower approximation number
\end{abstract}


\section{Introduction}
Rough set theory is based on equivalence relations, and it was proposed by Pawlak to handle incomplete and inexact knowledge in information systems.
It is an extension of set theory for studying and analyzing various types of data~\cite{Pawlak82Rough,Pawlak91Rough}.
Rough set theory has been successfully applied to many fields, such as machine learning~\cite{HuPanAnMaWei11Anefficient,MinCaiLiuBai07Dynamic}, pattern recognition~\cite{Li04Topological,RaiffaRichardsonMetcallfe02Negotiation}, intelligent decision making~\cite{Yao10Three-way}, granular computing~\cite{CalegariCiucci10Granularcomupting,ZhuWang06Covering}, data mining~\cite{LinCercone97Roughsets,PolkowskiSkowron98Rough2}, approximate reasoning~\cite{BittnerStell00Rough,Zadeh75III}, attribute reduction~\cite{HallHolmes03Benchmarking,MiaoZhaoYaoLiXu09relativereducts,MinHeQianZhu11Test,QianLiangPedryczDang10Positive,ZhangQiuWu07ageneralapproach}, rule induction~\cite{Kryszkiewicz98Rule,YaoWangWang05Rule} and others~\cite{ChenMiaoWang10aroughsetapproach,DaiXu12approximations,WeiLiangQian12acomparative}.
Moreover, through extending equivalence relations or partitions, some extensions of rough sets are proposed, such as generalized rough sets base on relations~\cite{Kryszkiewicz98Rough,LiuWang06Semantic,QinYangPei08generalizedroughsets,SlowinskiVanderpooten00AGeneralized,Yao98Constructive,Yao98Relational,ZhuWang06Binary}, and covering-based rough sets~\cite{BonikowskiBryniarskiWybraniecSkardowska98Extensions,Zhu07Topological,ZhuWang03Reduction,ZhuWang06Axiomatic,ZhuWang06ANew,ZhuWang06Relationships}.

Matroid theory~\cite{Lai01Matroid,Mao06TheRelation} was proposed by Whitney to generalize the essence of ``independence'' in linear algebra.
Matroids have sound theoretical foundations and wide applications.
In theory, matroids have powerful axiomatic systems which provide a platform for connecting them with other theories, such as rough sets~\cite{LiuZhuZhang12Relationshipbetween,LiuZhu12characteristicofpartition-circuitmatroid}, generalized rough sets based on relations~\cite{WangZhuMin11TheVectorially,ZhangWangFengFeng11reductionofrough,ZhuWang11Matroidal,ZhuWang11Rough}, covering-based rough sets~\cite{WangZhuMin11Transversal,WangZhu11Matroidal} and geometric lattices~\cite{AignerDowling71matchingtheory,Matus91abstractfunctional}.
In application, matroids have been used in diverse fields, such as combinatorial optimization~\cite{Lawler01Combinatorialoptimization}, algorithm design~\cite{Edmonds71Matroids},  information coding~\cite{RouayhebSprintsonGeorghiades10ontheindex} and cryptology~\cite{DoughertyFreilingZeger07Networks}.

In this paper, for a universe and an equivalence relation on the universe, we define a parametric set family, with any subset of the universe as its parameter, is defined to connect rough sets and matroids.
Firstly, for any subset, the parametric set family is proved to satisfy the independent set axiom of matroids, then a matroid called a parametric matroid of the rough set with respect to the subset is generated by the parametric set family.
Two equivalent representations of the parametric set family are obtained through the lower approximation operator, and another equivalent representation is expressed by the partition generated by the equivalence relation.
Moreover, the parametric matroid of the rough set with respect to the subset is proved to be the direct sum of a partition-circuit matroid  and a free matroid, where the partition-circuit matroid is based on the restriction of the equivalence relation in the complement of the lower approximation of the subset and the free matroid is based on the lower approximation of the subset.
The partition-circuit matroid is the restriction of the parametric matroid of the rough set, and so is the free matroid.
Secondly, several characteristics of the parametric matroid of the rough set are studied by the lower approximation number which is proposed in~\cite{LiuZhu12characteristicofpartition-circuitmatroid}.
Since a partition-circuit matroid was well investigated through the lower approximation number in~\cite{LiuZhu12characteristicofpartition-circuitmatroid}, we use it to study the parametric matroid of the rough set as the direct sum of a partition-circuit matroid and a free matroid.
Independent sets, bases, circuits, the rank function and the closure operator of the parametric matroid of the rough set are well expressed by the lower approximation number.

The rest of this paper is organized as follows: In Section~\ref{S:preliminaries}, we recall some basic definitions of classical rough sets and matroids.
Section~\ref{S:aparametricmatroidofroughsets} defines a parametric set family and proves it to be an independent set family of a matroid which is called a parametric matroid of rough sets.
In Section~\ref{S:characteristicsoftheparametricmatroid}, we study characteristics of the parametric matroid of the rough set through the lower approximation number.
Finally, we conclude this paper in Section~\ref{S:conclusions}.

\section{Preliminaries}
\label{S:preliminaries}
In this section, we recall some basic definitions and related results which will be used in this paper.

\subsection{Binary relation}

Let $U$ be a universe.
If $R\in U\times U$, then $R$ is called a binary relation~\cite{RajagopalMason92Discrete} on $U$.
For all $(x, y)\in U\times U$, if $(x, y)\in R$, we say $x$ has relation $R$ with $y$, and denote this relationship as $xRy$.

Throughout this paper, a binary relation is simply called a relation.
In the following definition, we will introduce the restriction of a relation.

\begin{definition}(Restriction of a relation~\cite{RajagopalMason92Discrete})
\label{D:restrictionofrelation}
Let $R$ be a relation on $U$ and $X\subseteq U$.
The restriction of $R$ in $X$ is defined as follows:\\
\centerline{$R\upharpoonright X=\{(x, y)\in R:x\in X\}$.}
\end{definition}

We list an example to illustrate the restriction of a relation.

\begin{example}
Let $U=\{1, 2, 3, 4, 5\}$ be a universe, $R=\{(1, 2), (1, 3), (1, 5), (2, 3),$ $ (3, 1), (3, 3), (4, 5), (5, 2)\}$ be a relation on $U$ and $X=\{3, 5\}$. The restriction of $R$ in $X$ is that: $R\upharpoonright X=\{(3, 1), (3, 3), (5, 2)\}$.
\end{example}

Reflective, symmetric, and transitive properties play important roles in characterizing relations.
Then, we introduce equivalence relations through these three properties.

\begin{definition}(Reflexive, symmetric and transitive~\cite{RajagopalMason92Discrete})
Let $R$ be a relation on $U$.\\
If for all $x\in U$, $xRx$, we say $R$ is reflexive.\\
If for all $x, y\in U$, $xRy$ implies $yRx$, we say $R$ is symmetric.\\
If for all $x, y, z\in U$, $xRy$ and $yRz$ imply $xRz$, we say $R$ is transitive.
\end{definition}

\begin{definition}(Equivalence relation~\cite{RajagopalMason92Discrete})
\label{D:equivalenceraltion}
Let $R$ be a relation on $U$.
If $R$ is reflexive, symmetric and transitive, we say $R$ is an equivalence relation on $U$.
\end{definition}

The power of an equivalence relation lies in its ability to partition a set into the disjoint union of subsets called equivalence classes.

\begin{definition}(Equivalence class~\cite{RajagopalMason92Discrete})
Let $R$ be an equivalence on $U$.
For all $x\in U$, $[x]_{R}=\{y\in U:xRy\}$ is called the equivalence class of $x$ with respect to $R$.
\end{definition}

\subsection{Rough set model}

In this subsection, we introduce some concepts and properties of rough sets~\cite{Pawlak82Rough}.

Let $U$ be a non-empty finite set called a universe and $R$ an equivalence relation on $U$.
$R$ will generate a partition $U/R=\{P_{1}, P_{2}, \cdots, P_{m}\}$ on $U$, where $P_{1}, P_{2}, \cdots, P_{m}$ are the equivalence classes, and, in rough sets, they are also called elementary sets of $R$.
For any $X\subseteq U$, we can describe $X$ in terms of the elementary sets of $R$.
Specially, Pawlak~\cite{Pawlak82Rough} introduced two sets called lower and upper approximations.

\begin{definition}(Lower and upper approximations\cite{Pawlak82Rough})
\label{D:lowerandupper}
Let $U$ be a universe and $R$ an equivalence relation on $U$.
For all $X\subseteq U$,\\
\centerline{$\underline{R}(X)=\{x\in U:[x]_{R}\subseteq X\}$}\\
\centerline{$~~~~~~~~~~~~~~~=\cup\{P\in U/R : P\subseteq X\}$,}\\
\centerline{~~~~~~$\overline{R}(X)=\{x\in U:[x]_{R}\cap X\neq\emptyset\}$}\\
\centerline{$~~~~~~~~~~~~~~~~~~~~~=\cup\{P\in U/R : P\cap X\neq\emptyset\}$.}
where $\underline{R}(X), \overline{R}(X)$ is called the lower and upper approximations of $X$ with respect to $R$, respectively.
\end{definition}

In the following proposition, we list only some properties of the lower and upper approximations used in this paper.

\begin{proposition}(\cite{Pawlak82Rough})
\label{P:propertiesofapproximations}
Let $U$ be a universe and $R$ an equivalence relation on $U$.
For all $X, Y\subseteq U$,\\
(1) $\underline{R}(\emptyset)=\emptyset$;\\
(2) $\underline{R}(U)=U$;\\
(3) $\underline{R}(X)\subseteq X$;\\
(4) $\underline{R}(X\cap Y)=\underline{R}(X)\cap\underline{R}(Y)$;\\
(5) $X\subseteq Y\Rightarrow\underline{R}(X)\subseteq\underline{R}(Y)$;\\
(6) $\underline{R}(X)\cup\underline{R}(Y)\subseteq\underline{R}(X\cup Y)$;\\
(7) $\underline{R}(\underline{R}(X))=\underline{R}(X)$;\\
(8) $\overline{R}(X)=U-\underline{R}(U-X)$;\\
(9) $\underline{R}(X)=\overline{R}(\underline{R}(X))$.
\end{proposition}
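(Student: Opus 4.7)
The plan is to prove all nine items directly from the pointwise definition $\underline{R}(X)=\{x\in U:[x]_{R}\subseteq X\}$, exploiting two structural facts about an equivalence relation $R$: first, reflexivity gives $x\in [x]_R$ for every $x$; second, symmetry and transitivity yield the crucial class-rigidity identity that $y\in[x]_R$ is equivalent to $[y]_R=[x]_R$. Items (1), (2), (3) will follow instantly: no nonempty equivalence class is contained in $\emptyset$, every class lies in $U$, and reflexivity forces $[x]_R\subseteq X$ to entail $x\in X$.

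Items (4), (5), (6) are routine set-theoretic bookkeeping at the level of classes. For (4) I would observe that $[x]_R\subseteq X\cap Y$ is logically equivalent to the conjunction $[x]_R\subseteq X$ and $[x]_R\subseteq Y$, giving both inclusions in one stroke. Monotonicity (5) is immediate from transitivity of the subset relation, and (6) follows by applying (5) to $X\subseteq X\cup Y$ and $Y\subseteq X\cup Y$ and taking the union on the left.

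The three less mechanical items are (7), (8), (9), and their proofs are where class-rigidity really earns its keep. For (7), the inclusion $\underline{R}(\underline{R}(X))\subseteq\underline{R}(X)$ is just an instance of (3), while the reverse requires showing that if $[x]_R\subseteq X$ then every $y\in[x]_R$ satisfies $[y]_R\subseteq X$, which is immediate because $[y]_R=[x]_R$. Item (8) is the classical duality: $x\in\overline{R}(X)$ means $[x]_R\cap X\ne\emptyset$, i.e.\ $[x]_R$ is not contained in $U-X$, i.e.\ $x\notin\underline{R}(U-X)$. For (9), I first use upper-approximation extensivity $X\subseteq\overline{R}(X)$ (an easy consequence of reflexivity) applied to $\underline{R}(X)$ to get $\underline{R}(X)\subseteq\overline{R}(\underline{R}(X))$; the opposite inclusion again invokes class-rigidity, promoting a single witness $y\in[x]_R\cap\underline{R}(X)$ into $[x]_R=[y]_R\subseteq X$, hence $x\in\underline{R}(X)$.

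The main obstacle, such as it is, lies in keeping the distinction between ``$x$ has some friend in $X$'' and ``the whole class $[x]_R$ sits inside $X$'' clearly in mind, and in not overusing Proposition~\ref{P:propertiesofapproximations} (3) where equality, not just inclusion, is needed. The key technical lever throughout is the identity $[y]_R=[x]_R$ whenever $xRy$, which is precisely what drives the idempotence in (7) and the ``closedness'' assertion in (9), and what separates the equivalence-relation case from the general binary-relation setting.
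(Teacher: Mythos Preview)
Your argument is correct in every particular: the class-rigidity identity $y\in[x]_R\Leftrightarrow[y]_R=[x]_R$ is exactly the right tool for (7) and (9), and items (1)--(6) and (8) are handled cleanly. One stylistic point: in (1) you rely on $[x]_R\neq\emptyset$, which you correctly attribute to reflexivity; you might say that explicitly, since the pointwise definition alone does not exclude empty classes.

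As for comparison with the paper: the paper does not prove this proposition at all. It is quoted from Pawlak~\cite{Pawlak82Rough} as a list of known facts and left without argument. So your proposal is not a different route from the paper's proof---it is simply a proof where the paper offers none. That is perfectly appropriate here, since the paper treats Proposition~\ref{P:propertiesofapproximations} as background material to be invoked, not established.
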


\subsection{Matroid model}

Matroids have many equivalent definitions.
In the following definition, we will introduce one that focuses on independent sets.

\begin{definition}(Matroid~\cite{Lai01Matroid})
\label{D:matroid}
A matroid is a pair $M=(U, \mathbf{I})$ consisting a finite universe $U$ and a collection $\mathbf{I}$ of subsets of $U$ called independent sets satisfying the following three properties:\\
(I1) $\emptyset\in\mathbf{I}$;\\
(I2) If $I\in \mathbf{I}$ and $I'\subseteq I$, then $I'\in \mathbf{I}$;\\
(I3) If $I_{1}, I_{2}\in \mathbf{I}$ and $|I_{1}|<|I_{2}|$, then there exists $u\in I_{2}-I_{1}$ such that $I_{1}\cup \{u\}\in \mathbf{I}$, where $|I|$ denotes the cardinality of $I$.
\end{definition}

Since the above definition of matroids is defined from the viewpoint of independent sets, it is also called the independent set axiom of matroids.
In order to make some expressions brief, we introduce some symbols as follows.

\begin{definition}(\cite{Lai01Matroid})
Let $U$ be a finite universe and $\mathbf{A}$ a family of subsets of $U$.
Then\\
$Max(\mathbf{A})=\{X\in\mathbf{A}:\forall Y\in\mathbf{A}, X\subseteq Y\Rightarrow X=Y\}$;\\
$Min(\mathbf{A})=\{X\in\mathbf{A}:\forall Y\in\mathbf{A}, Y\subseteq X\Rightarrow X=Y\}$.
\end{definition}

Any maximal independent set of a matroid is a base.
A matroid and its family of bases are uniquely determined by each other.

\begin{definition}(Base~\cite{Lai01Matroid})
\label{D:base}
Let $M=(U, \mathbf{I})$ be a matroid.
Any maximal independent set in $M$ is called a base of $M$, and the family of bases of $M$ is denoted by $\mathbf{B}(M)$, i.e., $\mathbf{B}(M)=Max(\mathbf{I})$.
\end{definition}

In a matroid, a subset is a dependent set if it is not an independent set.
Any circuit of a matroid is a minimal dependent set.
A matroid uniquely determines its circuits, and vice versa.

\begin{definition}(Circuit~\cite{Lai01Matroid})
\label{D:circuit}
Let $M=(U, \mathbf{I})$ be a matroid.
Any minimal dependent set in $M$ is called a circuit of $M$, and we denote the family of all circuits of $M$ by $\mathbf{C}(M)$, i.e., $\mathbf{C}(M)=Min(2^{U}-\mathbf{I})$, where $2^{U}$ is the power set of $U$.
\end{definition}

The rank function of a matroid generalizes the maximal independence in vector subspaces.
A matroid can be defined from the viewpoint of the rank function.

\begin{definition}(Rank function \cite{Lai01Matroid})
\label{D:rank}
Let $M=(U, \mathbf{I})$ be a matroid and $X\subseteq U$.\\
\centerline{$r_{M}(X)=max\{|I|:I\subseteq X, I\in \mathbf{I}\}$,}
where $r_{M}$ is called the rank function of $M$.
\end{definition}

In order to represent the dependency between an element and a subset of a universe, we introduce the closure operator of a matroid.

\begin{definition}(Closure~\cite{Lai01Matroid})
\label{D:closure}
Let $M=(U, \mathbf{I})$ be a matroid and $X\subseteq U$.
For any $u\in U$, if $r_{M}(X)=r_{M}(X\bigcup\{u\})$, then $u$ depends on $X$. The subset of all elements depending on $X$ of $U$ is called the closure with respect to $X$ and denoted by $cl_{M}(X)$:\\
\centerline{$cl_{M}(X)=\{u\in U:r_{M}(X)=r_{M}(X\bigcup\{u\})\}$.}
\end{definition}

In the following definitions, we will introduce some special matroids used in this paper.

\begin{definition}(Free matroid~\cite{LiuChen94Matroid})
\label{D:freematroid}
Let $M=(U, \mathbf{I})$ be a matroid.
$M$ is called a free matroid if $\mathbf{I}=\{I:I\subseteq U\}$.
\end{definition}

We see that a matroid is a free matroid if any subset of its universe is an independent set.
In the following definition, we will introduce another matroid called restriction of a matroid.

\begin{definition}(Restriction~\cite{Lai01Matroid})
\label{D:restriction}
Let $M=(U, \mathbf{I})$ be a matroid. For any $X\subseteq U$, $M|X=(X, \mathbf{I}_{X})$ is called the restriction of $M$ in $X$, where $\mathbf{I}_{X}=\{I\in\mathbf{I}:I\subseteq X\}$.
\end{definition}

The following definition introduces a matroid called direct sum of matroids, which is expressed by the union of a family of matroids in different universes.

\begin{definition}(Direct sum of matroids~\cite{Lai01Matroid})
\label{D:directsumofmatroids}
Let $M_{1}=(U_{1}, \mathbf{I}_{1}), M_{2}=(U_{2}, \mathbf{I}_{2})$ be two matroids and $U_{1}\cap U_{2}=\emptyset$.
$M=(U, \mathbf{I})$ is a matroid where $U=U_{1}\cup U_{2}$ and $\mathbf{I}=\{I_{1}\cup I_{2}:I_{1}\in\mathbf{I}_{1}, I_{2}\in\mathbf{I}_{2}\}$.
We call $M$ the direct sum of $M_{1}$ and $M_{2}$, and denote it by $M=M_{1}\oplus M_{2}$.
\end{definition}

\section{A parametric matroid of rough set}
\label{S:aparametricmatroidofroughsets}
In this section, for a universe and an equivalence relation on the universe, we propose a parametric matroid of the rough set.
First, we present a parametric set family in the following definition.

\begin{definition}
\label{D:parametricsetfamily}
Let $R$ be an equivalence relation on $U$ and $X\subseteq U$.
We define a parametric set family with $X$ as its parameter as follows:\\
\centerline{$\mathbf{I}_{X}=\{I\subseteq U:\underline{R}(I)\subseteq X\}.$}
\end{definition}

In the following proposition, we will prove that the parametric set family satisfies the properties of independent sets of matroids.

\begin{proposition}
\label{P:parametricsetfamilysatisfiesindependentsetaxioms}
Let $R$ be an equivalence relation on $U$ and $X\subseteq U$.
Then, $\mathbf{I}_{X}$ satisfies (I1), (I2) and (I3) in Definition~\ref{D:matroid}.
\end{proposition}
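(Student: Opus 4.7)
The plan is to verify the three axioms in turn. Axioms (I1) and (I2) are immediate from Proposition~\ref{P:propertiesofapproximations}: part~(1) gives $\underline{R}(\emptyset)=\emptyset\subseteq X$, so $\emptyset\in \mathbf{I}_{X}$; and for $I'\subseteq I\in \mathbf{I}_{X}$, the monotonicity in part~(5) yields $\underline{R}(I')\subseteq \underline{R}(I)\subseteq X$, giving $I'\in \mathbf{I}_{X}$. The substantive work lies in axiom (I3).

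For (I3), I would argue by contradiction: assume $I_{1},I_{2}\in \mathbf{I}_{X}$ with $|I_{1}|<|I_{2}|$, and suppose that $I_{1}\cup\{u\}\notin \mathbf{I}_{X}$ for every $u\in I_{2}-I_{1}$. The first step is to localise the obstruction on the single class $[u]_{R}$. Since $\underline{R}(I_{1})\subseteq X$ already, adjoining one element $u$ can only enlarge the lower approximation by equivalence classes that become fully contained in $I_{1}\cup\{u\}$, and any such class $P$ must satisfy $P-I_{1}=\{u\}$, forcing $u\in P$ and hence $P=[u]_{R}$. Therefore the failure $\underline{R}(I_{1}\cup\{u\})\not\subseteq X$ is equivalent to the two simultaneous conditions $[u]_{R}-\{u\}\subseteq I_{1}$ and $[u]_{R}\not\subseteq X$.

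The second step is a double injection. The map $u\mapsto[u]_{R}$ on $I_{2}-I_{1}$ is injective, because if two distinct $u_{1},u_{2}\in I_{2}-I_{1}$ shared a class $P$, then $P-\{u_{1}\}\subseteq I_{1}$ would place $u_{2}$ in $I_{1}$, a contradiction. For each class $P=[u]_{R}$ so obtained, $I_{2}\in \mathbf{I}_{X}$ together with $P\not\subseteq X$ forces $P\not\subseteq I_{2}$, so one may choose $v_{P}\in P-I_{2}$; because $u\in I_{2}$ we have $v_{P}\neq u$, and hence $v_{P}\in P-\{u\}\subseteq I_{1}$, giving $v_{P}\in I_{1}-I_{2}$. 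Disjointness of equivalence classes makes $P\mapsto v_{P}$ injective, and composition yields $|I_{2}-I_{1}|\leq|I_{1}-I_{2}|$, contradicting $|I_{2}|>|I_{1}|$.

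The main obstacle is precisely this localisation-plus-counting for (I3). Once one sees that an obstruction $u$ must pin down the whole of its equivalence class inside $I_{1}\cup\{u\}$, the rest is a symmetric use of membership in $\mathbf{I}_{X}$: the set $I_{1}$ supplies ``all but $u$'' of the bad class $[u]_{R}$, while $I_{2}$ is forbidden to contain that class entirely, and these two facts feed directly into the cardinality comparison between $I_{2}-I_{1}$ and $I_{1}-I_{2}$.
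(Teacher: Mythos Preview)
Your proposal is correct and follows essentially the same approach as the paper's proof: both verify (I1) and (I2) directly from Proposition~\ref{P:propertiesofapproximations}, and for (I3) both argue by contradiction, localise the failure $\underline{R}(I_{1}\cup\{u\})\not\subseteq X$ to the single equivalence class $[u]_{R}$ being contained in $I_{1}\cup\{u\}$ with $[u]_{R}\not\subseteq X$, show that distinct $u\in I_{2}-I_{1}$ lie in distinct classes, and then use $\underline{R}(I_{2})\subseteq X$ to produce an injection into $I_{1}-I_{2}$, yielding the cardinality contradiction. Your write-up is in fact more explicit on a couple of points the paper states without justification (namely why distinct $u$'s give distinct classes, and why the witness $v_{P}$ lands in $I_{1}-I_{2}$), but the underlying argument is the same.
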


\begin{proof}
(I1) According to (1) of Proposition~\ref{P:propertiesofapproximations}, $\underline{R}(\emptyset)=\emptyset$.
Since $\emptyset\subseteq X$, according to Definition~\ref{D:parametricsetfamily}, $\emptyset\in\mathbf{I}_{X}$.

~~~~(I2) If $I\in\mathbf{I}_{X}, I'\subseteq I$, according to Definition~\ref{D:parametricsetfamily} and (5) of Proposition~\ref{P:propertiesofapproximations}, $\underline{R}(I)\subseteq X$ and $\underline{R}(I')\subseteq\underline{R}(I)$, then $\underline{R}(I')\subseteq X$, hence $I'\in\mathbf{I}_{X}$.

~~~~(I3) If $I_{1}, I_{2}\in\mathbf{I}_{X}$ and $|I_{1}|<|I_{2}|$, then there exists $u\in I_{2}-I_{1}$ such that $I_{1}\cup\{u\}\in\mathbf{I}_{X}$.
Suppose for all $u\in I_{2}-I_{1}$, $I_{1}\cup\{u\}\notin\mathbf{I}_{X}$.
According to Definition~\ref{D:parametricsetfamily}, $\underline{R}(I_{1}\cup\{u\})\nsubseteq X$.
Since $I_{1}\in\mathbf{I}_{X}$, i.e., $\underline{R}(I_{1})\subseteq X$, therefore $P_{u}\subseteq I_{1}\cup\{u\}$ and $P_{u}\nsubseteq X$, where $u\in P_{u}\in U/R$.
For all $u_{1}, u_{2}\in I_{2}-I_{1}$, $u_{1}\neq u_{2}$, $u_{1}\in P_{u_{1}}\in U/R$ and $u_{2}\in P_{u_{2}}\in U/R$, then $P_{u_{1}}\neq P_{u_{2}}$.
Since $I_{1}=(I_{1}-I_{2})\cup(I_{1}\cap I_{2}), I_{2}=(I_{2}-I_{1})\cup(I_{1}\cap I_{2})$ and $\underline{R}(I_{2})\subseteq X$, then for all $x\in I_{2}-I_{1}$, there exists $y\in I_{1}-I_{2}$ such that $y\in P_{x}\in U/R$, therefore $|I_{2}-I_{1}|\leq |I_{1}-I_{2}|$.
Since $|I_{1}|=|I_{1}-I_{2}|+|I_{1}\cap I_{2}|, |I_{2}|=|I_{2}-I_{1}|+|I_{1}\cap I_{2}|$, then $|I_{2}|\leq |I_{1}|$, which is contradictory with the condition $|I_{1}|<|I_{2}|$.
Therefore, there exists $u\in I_{2}-I_{1}$ such that $\underline{R}(I_{1}\cup\{u\})\subseteq X$, i.e., $I_{1}\cup\{u\}\in\mathbf{I}_{X}$.
\end{proof}

From Proposition~\ref{P:parametricsetfamilysatisfiesindependentsetaxioms}, we see that the parametric set family satisfies the independent set axiom of matroids, therefore it can generate a matroid.

\begin{definition}
Let $R$ be an equivalence relation on $U$ and $X\subseteq U$.
The matroid with $\mathbf{I}_{X}$ as its independent set family is denoted by $M_{X}=(U, \mathbf{I}_{X})$.
We say $M_{X}$ is the parametric matroid of the rough set with respect to $X$.
\end{definition}

The following example illustrates the parametric matroid of the rough set.

\begin{example}
Let $R=\{(1, 1), (1, 2), (2, 1), (2, 2), (3, 3)\}$ be an equivalence relation on $U=\{1, 2, 3\}$ and $X=\{1\}$.
Then the partition induced by $R$ is $U/R=\{\{1, 2\}, \{3\}\}$.
According to Definition~\ref{D:lowerandupper}, $\underline{R}(\emptyset)=\underline{R}(\{1\})=\underline{R}(\{2\})=\emptyset, \underline{R}(\{3\})=\{3\}, \underline{R}(\{1, 2\})=\{1, 2\}, \underline{R}(\{1, 3\})=\{3\}, \underline{R}(\{2, 3\})=\{3\}, \underline{R}(\{1, 2, 3\})=\{1, 2, 3\}$, Therefore the parametric matroid with respect to $X$ is $M_{X}=(U, \mathbf{I}_{X})$, where $\mathbf{I}_{X}=\{\emptyset, \{1\}, \{2\}\}$.
\end{example}

In the following two propositions, through the lower approximation operator, we obtain two equivalent representations of the parametric set family.

\begin{proposition}
\label{P:secondformofparametricsetfamily}
Let $R$ be an equivalence relation on $U$ and $X\subseteq U$.\\
\centerline{$\mathbf{I}_{X}=\{I\subseteq U:\underline{R}(I)\subseteq\underline{R}(X)\}.$}
\end{proposition}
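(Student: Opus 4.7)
The plan is to prove the set equality $\mathbf{I}_{X} = \{I \subseteq U : \underline{R}(I) \subseteq \underline{R}(X)\}$ by showing mutual inclusion, unfolding the definition of $\mathbf{I}_{X}$ from Definition~\ref{D:parametricsetfamily} and relying only on properties (3), (5), and (7) from Proposition~\ref{P:propertiesofapproximations}.

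For the inclusion $\mathbf{I}_{X} \subseteq \{I \subseteq U : \underline{R}(I) \subseteq \underline{R}(X)\}$, I would take any $I \in \mathbf{I}_{X}$, so $\underline{R}(I) \subseteq X$. Applying monotonicity of the lower approximation (property (5)) to this inclusion gives $\underline{R}(\underline{R}(I)) \subseteq \underline{R}(X)$, and then idempotence (property (7)) collapses the left side to $\underline{R}(I)$, yielding the desired $\underline{R}(I) \subseteq \underline{R}(X)$.

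For the reverse inclusion, I would take $I$ with $\underline{R}(I) \subseteq \underline{R}(X)$ and simply chain this with property (3), which says $\underline{R}(X) \subseteq X$, to obtain $\underline{R}(I) \subseteq X$, i.e., $I \in \mathbf{I}_{X}$.

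There is no real obstacle here; the statement is essentially a direct consequence of the facts that $\underline{R}$ is a monotone idempotent contraction. The only point that needs emphasis in presentation is that the contractive property (3) is the reason why the stronger-looking condition $\underline{R}(I) \subseteq \underline{R}(X)$ is in fact equivalent to the weaker $\underline{R}(I) \subseteq X$, while idempotence (7) is what drives the converse. The proof should therefore be just a few lines.
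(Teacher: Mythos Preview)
Your proof is correct and follows essentially the same approach as the paper: both reduce the set equality to the equivalence $\underline{R}(I)\subseteq X \Leftrightarrow \underline{R}(I)\subseteq\underline{R}(X)$, proving the forward direction via monotonicity (5) and idempotence (7), and the reverse via contractivity (3).
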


\begin{proof}
We need to prove $\underline{R}(I)\subseteq X\Leftrightarrow\underline{R}(I)\subseteq\underline{R}(X)$.\\
$(\Rightarrow)$: According to (5) and (7) of Proposition~\ref{P:propertiesofapproximations}, $\underline{R}(I)\subseteq X\Rightarrow\underline{R}(\underline{R}(I))\subseteq\underline{R}(X)\Rightarrow\underline{R}(I)\subseteq\underline{R}(X)$.\\
$(\Leftarrow)$: According to (3) of Proposition~\ref{P:propertiesofapproximations}, $\underline{R}(X)\subseteq X$.
Since $\underline{R}(I)\subseteq\underline{R}(X)$, then $\underline{R}(I)\subseteq X$.
\end{proof}

\begin{proposition}
\label{P:thirdformofparametricsetfamily}
Let $R$ be an equivalence relation on $U$ and $X\subseteq U$.\\
\centerline{$\mathbf{I}_{X}=\{I\subseteq U:\underline{R}(I-\underline{R}(X))=\emptyset\}.$}
\end{proposition}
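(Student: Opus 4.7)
The plan is to use the preceding Proposition~\ref{P:secondformofparametricsetfamily} to rewrite the target equality as the set-theoretic identity
\centerline{$\underline{R}(I)\subseteq\underline{R}(X)\Longleftrightarrow\underline{R}(I-\underline{R}(X))=\emptyset$,}
and then establish each implication using properties (3)--(9) of Proposition~\ref{P:propertiesofapproximations}. The only slightly non-obvious ingredient is the identity $\underline{R}(U-\underline{R}(X))=U-\underline{R}(X)$, i.e., that the complement of a lower approximation is itself ``definable'' and so equals its own lower approximation.

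For the forward direction, I would rewrite the set $I-\underline{R}(X)$ as $I\cap(U-\underline{R}(X))$ and apply property~(4) to get $\underline{R}(I-\underline{R}(X))=\underline{R}(I)\cap\underline{R}(U-\underline{R}(X))$. To simplify the second factor, I combine properties~(8) and~(9): by~(8), $\underline{R}(U-\underline{R}(X))=U-\overline{R}(\underline{R}(X))$, and by~(9), $\overline{R}(\underline{R}(X))=\underline{R}(X)$, so $\underline{R}(U-\underline{R}(X))=U-\underline{R}(X)$. Then under the hypothesis $\underline{R}(I)\subseteq\underline{R}(X)$, the intersection $\underline{R}(I)\cap(U-\underline{R}(X))$ is empty, finishing this direction.

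For the reverse direction, I would argue by contrapositive in terms of equivalence classes, leaning on the fact that $\underline{R}(X)$ is a union of equivalence classes (Definition~\ref{D:lowerandupper}). Assuming $\underline{R}(I)\not\subseteq\underline{R}(X)$, pick $x\in\underline{R}(I)-\underline{R}(X)$, so $[x]_{R}\subseteq I$. Because $\underline{R}(X)$ is a union of classes and $x\notin\underline{R}(X)$, the class $[x]_{R}$ is disjoint from $\underline{R}(X)$, hence $[x]_{R}\subseteq I-\underline{R}(X)$, which forces $x\in\underline{R}(I-\underline{R}(X))$ and contradicts the hypothesis.

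I do not expect a serious obstacle here; the proof is essentially a bookkeeping exercise with the approximation properties. The one place that requires a moment's thought is recognising that $U-\underline{R}(X)$ is closed under equivalence classes, which is what turns both directions into one-line deductions. A purely element-based proof via equivalence classes would also work for both directions, but routing the forward direction through properties~(4), (8), and~(9) keeps it cleaner and consistent with the style used in Proposition~\ref{P:secondformofparametricsetfamily}.
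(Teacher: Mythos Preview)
Your proposal is correct and, for the forward direction, follows exactly the paper's route: reduce via Proposition~\ref{P:secondformofparametricsetfamily}, write $I-\underline{R}(X)=I\cap(U-\underline{R}(X))$, apply property~(4), and simplify the second factor with~(8) and~(9). The only difference is that the paper treats the whole thing as a single chain of biconditionals, using the set-theoretic fact $A\cap B=\emptyset\Leftrightarrow A\subseteq U-B$ together with $U-\underline{R}(U-\underline{R}(X))=\overline{R}(\underline{R}(X))=\underline{R}(X)$, so the reverse direction falls out of the same computation. Your class-based contrapositive for the reverse direction is perfectly valid but unnecessary: the identity $\underline{R}(I-\underline{R}(X))=\underline{R}(I)\cap(U-\underline{R}(X))$ that you already established is an equality, not just an inclusion, so it immediately gives both implications.
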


\begin{proof}
According to Proposition~\ref{P:secondformofparametricsetfamily}, $\mathbf{I}_{X}=\{I\subseteq U:\underline{R}(I)\subseteq\underline{R}(X)\}$.
Therefore, we need to prove $\underline{R}(I-\underline{R}(X))=\emptyset\Leftrightarrow\underline{R}(I)\subseteq\underline{R}(X)$.
According to Proposition~\ref{P:propertiesofapproximations}, $\underline{R}(I-\underline{R}(X))=\underline{R}(I\cap(U-\underline{R}(X)))=\underline{R}(I)\cap\underline{R}(U-\underline{R}(X))=\emptyset\Leftrightarrow\underline{R}(I)\subseteq U-\underline{R}(U-\underline{R}(X))=\overline{R}(\underline{R}(X))=\underline{R}(X)$, i.e., $\underline{R}(I)\subseteq\underline{R}(X)$.
To sum up, this completes the proof.
\end{proof}

The parametric set family is based on an equivalence relation on a universe.
Since there is a one-to-one correspondence between equivalence relations and partitions, we want to know whether the parametric set family can be represented by the partition generated by the equivalence relation.

\begin{proposition}
\label{P:fourthformofparametricsetfamily}
Let $R$ be an equivalence relation on $U$ and $X\subseteq U$.\\
\centerline{$\mathbf{I}_{X}=\{I\subseteq U:\forall P\in U/R, P\nsubseteq\underline{R}(X)\Rightarrow |P\cap I|\leq |P|-1\}$.}
\end{proposition}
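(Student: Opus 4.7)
The plan is to reduce the claim to Proposition~\ref{P:secondformofparametricsetfamily}, which says $\mathbf{I}_{X}=\{I\subseteq U:\underline{R}(I)\subseteq\underline{R}(X)\}$. So it suffices to show, for any $I\subseteq U$, that $\underline{R}(I)\subseteq\underline{R}(X)$ holds if and only if every equivalence class $P\in U/R$ with $P\nsubseteq\underline{R}(X)$ satisfies $|P\cap I|\leq |P|-1$.

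First I would observe that by Definition~\ref{D:lowerandupper}, $\underline{R}(I)=\cup\{P\in U/R:P\subseteq I\}$ is a union of equivalence classes, and similarly $\underline{R}(X)$ is a union of equivalence classes. Since $U/R$ is a partition, a union of classes is contained in another union of classes exactly when every class contributing to the first also lies in the second. Hence $\underline{R}(I)\subseteq\underline{R}(X)$ is equivalent to the statement that for every $P\in U/R$, $P\subseteq I$ implies $P\subseteq\underline{R}(X)$, which by contraposition is the statement that for every $P\in U/R$, $P\nsubseteq\underline{R}(X)$ implies $P\nsubseteq I$.

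Next I would translate the containment condition into the cardinality condition. For any $P\in U/R$ one has $P\cap I\subseteq P$, so in the finite setting $P\nsubseteq I$ is equivalent to $P\cap I\subsetneq P$, i.e.\ $|P\cap I|<|P|$, i.e.\ $|P\cap I|\leq |P|-1$. Combining with the previous paragraph gives the desired equivalence and hence the proposition.

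No step looks genuinely hard; the only thing to be careful about is the use of the fact that both $\underline{R}(I)$ and $\underline{R}(X)$ are unions of equivalence classes, which is immediate from the second form of the definition of the lower approximation. If one preferred to avoid that fact, one could instead unwind $\underline{R}(I)\subseteq\underline{R}(X)$ via Proposition~\ref{P:thirdformofparametricsetfamily} or via the closure property (7) of Proposition~\ref{P:propertiesofapproximations}, but the partition-based argument above is the most direct.
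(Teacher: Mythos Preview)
Your argument is correct and, if anything, cleaner than the one in the paper. The paper proves the proposition by reducing to Proposition~\ref{P:thirdformofparametricsetfamily}, i.e.\ to the characterization $\underline{R}(I-\underline{R}(X))=\emptyset$, and then carries out explicit set-algebra computations with $P\cap(I-\underline{R}(X))$ and $P\cap(I\cap\underline{R}(X))$ to pass between that condition and the cardinality condition, handling the two inclusions separately and checking the case $P\subseteq\underline{R}(X)$ by hand. You instead reduce to Proposition~\ref{P:secondformofparametricsetfamily}, observe that both $\underline{R}(I)$ and $\underline{R}(X)$ are unions of equivalence classes, and use the disjointness of the partition to turn the inclusion $\underline{R}(I)\subseteq\underline{R}(X)$ directly into a class-by-class statement whose contrapositive is exactly the cardinality condition. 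Your route avoids the bookkeeping with $I-\underline{R}(X)$ entirely; the paper's route has the minor advantage of linking the result more explicitly to the ``$\underline{R}(I-\underline{R}(X))=\emptyset$'' form that is reused later via the lower approximation number, but as a proof of the proposition itself your argument is shorter and more transparent.
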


\begin{proof}
According to Proposition~\ref{P:thirdformofparametricsetfamily}, we need to prove $\{I\subseteq U:\underline{R}(I-\underline{R}(X))=\emptyset\}=\{I\subseteq U:\forall P\in U/R, P\nsubseteq\underline{R}(X), |P\cap I|\leq |P|-1\}$.\\
$(\Rightarrow)$: For all $I\in\{I\subseteq U:\underline{R}(I-\underline{R}(X))=\emptyset\}$, according to Definition~\ref{D:lowerandupper}, for all $P\in U/R$, $P\nsubseteq I-\underline{R}(X)$, then $|P\cap (I-\underline{R}(X))|\leq |P|-1$.
If $P\nsubseteq\underline{R}(X)$, then $|P\cap (I-\underline{R}(X))|=|(P\cap (I-\underline{R}(X)))\cup(P\cap(I\cap\underline{R}(X)))|=|P\cap((I-\underline{R}(X))\cup(I\cap\underline{R}(X)))|=|P\cap I|\leq |P|-1$.
This proves that $\{I\subseteq U:\underline{R}(I-\underline{R}(X))=\emptyset\}\subseteq\{I\subseteq U:\forall P\in U/R, P\nsubseteq\underline{R}(X), |P\cap I|\leq |P|-1\}$.\\
$(\Leftarrow)$: For all $I\in\{I\subseteq U:\forall P\in U/R, P\nsubseteq\underline{R}(X), |P\cap I|\leq |P|-1\}$, since $I\cap\underline{R}(X)\subseteq\underline{R}(X)$, then $P\cap(I\cap\underline{R}(X))=\emptyset$, therefore, $|P\cap I|=|P\cap ((I-\underline{R}(X))\cup(I\cap\underline{R}(X)))|=|(P\cap(I-\underline{R}(X)))\cup(P\cap(I\cap\underline{R}(X)))|=|P\cap(I-\underline{R}(X))|\leq |P|-1$, so $P\nsubseteq I-\underline{R}(X)$.
Since $I-\underline{R}(X)\subseteq U-\underline{R}(X)$, then for all $P\subseteq\underline{R}(X)$ where $P\in U/R$, $P\nsubseteq I-\underline{R}(X)$.
Therefore, for all $P\in U/R$, $P\nsubseteq I-\underline{R}(X)$.
According to Definition~\ref{D:lowerandupper}, $\underline{R}(I-\underline{R}(X))=\emptyset$.
This proves that $\{I\subseteq U:\underline{R}(I-\underline{R}(X))=\emptyset\}\supseteq\{I\subseteq U:\forall P\in U/R, P\nsubseteq\underline{R}(X), |P\cap I|\leq |P|-1\}$.
\end{proof}

For the parametric set family, its parameter is any subset of the universe.
We will consider the situation when the subset is equal to empty set.
First, we introduce a partition-circuit matroid induced by a partition~\cite{LiuZhu12characteristicofpartition-circuitmatroid}.
Since there is a one-to-one correspondence from a partition to an equivalence relation, a partition-circuit matroid based on an equivalence relation is introduced at follows.

\begin{definition}(Partition-circuit matroid~\cite{LiuZhu12characteristicofpartition-circuitmatroid})
\label{D:partitioncircuitmatroid}
Let $R$ be an equivalence relation on $U$.
A partition-circuit matroid $M_{R}$ is an ordered pair $(U, \mathbf{I}_{R})$ where $\mathbf{C}(M_{R})=U/R$.
\end{definition}

The independent sets of a partition-circuit matroid can be expressed by the lower approximation operator.

\begin{proposition}(\cite{LiuZhu12characteristicofpartition-circuitmatroid})
\label{P:partition-circuit'sindependent}
Let $R$ be an equivalence relation on $U$ and $M_{R}=(U, \mathbf{I}_{R})$ the partition-circuit matroid.
Then, $\mathbf{I}_{R}=\{I\subseteq U:\underline{R}(I)=\emptyset\}$.
\end{proposition}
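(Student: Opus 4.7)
The plan is to show the two sets coincide by unwinding both definitions into the same combinatorial condition on equivalence classes. The key observation is that, in any matroid, the independent sets are exactly those subsets of the ground set which contain no circuit. Since $\mathbf{C}(M_R) = U/R$ by Definition~\ref{D:partitioncircuitmatroid}, a subset $I \subseteq U$ lies in $\mathbf{I}_R$ if and only if there is no $P \in U/R$ with $P \subseteq I$.

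Next, I would translate the condition $\underline{R}(I) = \emptyset$ into the same form. By the second expression in Definition~\ref{D:lowerandupper}, $\underline{R}(I) = \bigcup\{P \in U/R : P \subseteq I\}$. Since the equivalence classes are nonempty, this union is empty exactly when the index set is empty, i.e., when no $P \in U/R$ satisfies $P \subseteq I$.

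Putting the two equivalences together yields $I \in \mathbf{I}_R \iff \underline{R}(I) = \emptyset$, which is the claim. I would present this as a short two-line chain of equivalences rather than split it into two directions. There is no real obstacle: the only subtlety is making sure to invoke the circuit-characterization of independence (noting that $\mathbf{I}_R$ was defined indirectly as the independent sets of the matroid whose circuits are $U/R$), and to use the union-over-classes form of $\underline{R}$ rather than the pointwise $[x]_R \subseteq I$ form, since the former immediately matches the circuit condition.
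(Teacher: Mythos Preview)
Your argument is correct: independence in a matroid is equivalent to containing no circuit, and with $\mathbf{C}(M_R)=U/R$ this immediately matches the condition that no block of $U/R$ lies inside $I$, which in turn is precisely $\underline{R}(I)=\emptyset$ via the union form of the lower approximation in Definition~\ref{D:lowerandupper}. The only point worth making explicit is that equivalence classes are nonempty, which you already flag.

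As for comparison with the paper: the paper does not give its own proof of this proposition. It is imported as a cited result from~\cite{LiuZhu12characteristicofpartition-circuitmatroid} (see the citation attached to the proposition statement), so there is no in-paper argument to compare against. Your short chain of equivalences is exactly the natural proof one would expect and would serve perfectly well if a self-contained argument were desired here.
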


According to Proposition~\ref{P:thirdformofparametricsetfamily} and Proposition~\ref{P:partition-circuit'sindependent}, one can see that a parametric matroid of the rough set with respect to a subset of the universe is degenerated to a partition-circuit matroid when the subset is empty set.
We will ask a question that ``what is the relationship between a parametric matroid with respect to an arbitrary subset and a partition-circuit matroid?''.
In order to answer this question, we first propose one proposition and two lemmas as follows.

\begin{proposition}
Let $R$ be an equivalence relation on $U$.
For any $X\subseteq U$, $X=\underline{R}(X)$ if and only if $R\upharpoonright X$ is an equivalence relation on $X$.
\end{proposition}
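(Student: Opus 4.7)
The plan is to reduce the biconditional to the statement that $X$ is a union of $R$-equivalence classes, and then check the three axioms of an equivalence relation on $X$ directly.

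First I would note that by the second form of Definition~\ref{D:lowerandupper}, $\underline{R}(X)=\cup\{P\in U/R:P\subseteq X\}$, so $X=\underline{R}(X)$ is equivalent to saying $X$ is a union of equivalence classes of $R$, i.e.\ $[x]_R\subseteq X$ for every $x\in X$. This reformulation is the real bridge between the two sides.

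For the forward direction, assume $X=\underline{R}(X)$. I would first check that $R\upharpoonright X\subseteq X\times X$, which is the nontrivial closure condition needed for $R\upharpoonright X$ to be a relation on $X$: if $(x,y)\in R\upharpoonright X$, then $x\in X$ and $(x,y)\in R$, so $y\in[x]_R\subseteq X$. Reflexivity on $X$ follows from the reflexivity of $R$ and the Definition~\ref{D:restrictionofrelation} of restriction; symmetry on $X$ uses the closure just proved (so that the symmetric image $(y,x)$ still has its first coordinate in $X$); transitivity on $X$ uses that whenever $(x,y),(y,z)\in R\upharpoonright X$ the first coordinate $x$ lies in $X$, so transitivity of $R$ gives $(x,z)\in R\upharpoonright X$.

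For the converse, assume $R\upharpoonright X$ is an equivalence relation on $X$. By (3) of Proposition~\ref{P:propertiesofapproximations} we already have $\underline{R}(X)\subseteq X$, so it suffices to establish $X\subseteq\underline{R}(X)$. Fix $x\in X$ and let $y\in[x]_R$, so $(x,y)\in R$. Since $x\in X$, Definition~\ref{D:restrictionofrelation} gives $(x,y)\in R\upharpoonright X$. Because $R\upharpoonright X$ is by hypothesis a relation on $X$ (hence contained in $X\times X$), we conclude $y\in X$. Thus $[x]_R\subseteq X$, which means $x\in\underline{R}(X)$, and the two directions close the biconditional.

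The only conceptual subtlety, and the main step to argue carefully, is the closure direction used in both halves: namely extracting from "equivalence relation on $X$" the fact that $R\upharpoonright X\subseteq X\times X$, and conversely using $X=\underline{R}(X)$ to guarantee this closure. The verification of the three equivalence-relation axioms is then essentially routine once closure is in hand, because they are inherited from the corresponding axioms of $R$.
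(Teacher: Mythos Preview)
Your proof is correct and the forward direction matches the paper's argument almost verbatim: both verify reflexivity, symmetry, and transitivity, with the crucial step being that $X=\underline{R}(X)$ forces $y\in X$ whenever $(x,y)\in R\upharpoonright X$.

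The converse is where you diverge slightly. The paper argues by contradiction: assuming $X\neq\underline{R}(X)$, it picks $x\in X\setminus\underline{R}(X)$ and $y\in[x]_R\setminus X$, then observes $(x,y)\in R\upharpoonright X$ while $(y,x)\notin R\upharpoonright X$, contradicting symmetry. You instead give a direct argument that uses only the containment $R\upharpoonright X\subseteq X\times X$ implicit in ``relation on $X$'', never invoking symmetry at all. Your route is a bit more economical, since it isolates the minimal hypothesis actually needed (closure of $X$ under second coordinates), whereas the paper's version has the advantage of exhibiting a concrete axiom failure. Both are entirely valid; the underlying content is the same observation that $R\upharpoonright X$ escaping $X\times X$ is exactly what blocks $X$ from being $R$-saturated.
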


\begin{proof}
($\Rightarrow$): Since $R$ be an equivalence on $U$ and $X\subseteq U$, then for all $x\in X$, $(x, x)\in R$, i.e., $(x, x)\in R\upharpoonright X$.
Therefore, $R\upharpoonright X$ is reflexive.\\
If $(x, y)\in R\upharpoonright X$, then $(x, y)\in R$ and $x\in X$.
Since $R$ is an equivalence relation, then $(y, x)\in R$ and $y\in [x]_{R}$.
Since $X=\underline{R}(X)$, then $x\in\underline{R}(X)$.
According to Definition~\ref{D:lowerandupper}, $[x]_{R}\subseteq X$, then $y\in X$.
Therefore, $(y, x)\in R\upharpoonright X$.
Hence, $R\upharpoonright X$ is symmetric.\\
Suppose $(x, y)\in R\upharpoonright X, (y, z)\in R\upharpoonright X$, then $(x, y)\in R, (y, z)\in R, x\in X$ and $y\in X$.
Since $R$ is an equivalence relation, then $(x, z)\in R$.
Since $x\in X$, then $(x, z)\in R\upharpoonright X$.
Therefore, $R\upharpoonright X$ is transitive.\\
So $R\upharpoonright X$ is an equivalence relation on $X$.\\
($\Leftarrow$): Suppose $R\upharpoonright X$ is an equivalence relation on $X$, $X\neq\underline{R}(X)$.
According to Proposition~\ref{P:propertiesofapproximations}, $\underline{R}(X)\subseteq X$.
Therefore, there exists $x\in X-\underline{R}(X)$ such that $[x]_{R}\nsubseteq X$, then there exists $y\in [x]_{R}-X$.
Since $R$ is an equivalence relation, then $(x, y)\in R$ and $(y, x)\in R$.
Since $x\in X, y\notin X$, then $(x, y)\in R\upharpoonright X, (y, x)\notin R\upharpoonright X$ which is contradictory with the condition that $R\upharpoonright X$ is an equivalence relation.
Therefore, $X=\underline{R}(X)$.
\end{proof}

For an equivalence relation on a universe, if the lower approximation of a subset of the universe is equal to the subset itself, then the restriction of the equivalence relation in the subset is an equivalence relation on the subset.

\begin{remark}
If $R$ is an equivalence relation on $U$ and $X\subseteq U$, then $R\upharpoonright\underline{R}(X)$ is an equivalence relation on $\underline{R}(X)$ and $R\upharpoonright U-\underline{R}(X)$ is an equivalence relation on $U-\underline{R}(X)$.
Moreover $\underline{R}(X)/(R\upharpoonright\underline{R}(X))=\{P\in U/R:P\subseteq\underline{R}(X)\}$ and $(U-\underline{R}(X))/(R\upharpoonright U-\underline{R}(X))=\{P\in U/R:P\subseteq U-\underline{R}(X)\}$.
\end{remark}

\begin{lemma}
\label{L:twoequivalencerelations}
Let $R$ be an equivalence relation on $U$ and $X\subseteq U$.
For any $I\subseteq\underline{R}(X)$, $\underline{R\upharpoonright\underline{R}(X)}(I)=\emptyset$ if and only if $\underline{R}(I)=\emptyset$.
\end{lemma}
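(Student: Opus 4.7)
The plan is to reduce both sides of the biconditional to the same statement about equivalence classes of the original relation $R$, using the partition‑based characterization of the lower approximation in Definition~\ref{D:lowerandupper}. The key observation comes directly from the Remark preceding the lemma: the equivalence classes of $R\upharpoonright\underline{R}(X)$ on $\underline{R}(X)$ are precisely the members of $U/R$ that are contained in $\underline{R}(X)$, i.e., $\underline{R}(X)/(R\upharpoonright\underline{R}(X))=\{P\in U/R:P\subseteq\underline{R}(X)\}$.

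First I would rewrite both lower approximations using the partition form in Definition~\ref{D:lowerandupper}. By the Remark, $\underline{R\upharpoonright\underline{R}(X)}(I)=\bigcup\{P\in U/R:P\subseteq\underline{R}(X)\text{ and }P\subseteq I\}$, while $\underline{R}(I)=\bigcup\{P\in U/R:P\subseteq I\}$. Thus the lemma reduces to showing, for $I\subseteq\underline{R}(X)$, that no $P\in U/R$ with $P\subseteq\underline{R}(X)$ satisfies $P\subseteq I$ if and only if no $P\in U/R$ satisfies $P\subseteq I$.

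The ($\Leftarrow$) direction is immediate: if no equivalence class of $R$ fits inside $I$ at all, then in particular no class that fits inside $\underline{R}(X)$ fits inside $I$. For the ($\Rightarrow$) direction, suppose for contradiction that some $P\in U/R$ satisfies $P\subseteq I$. Since $I\subseteq\underline{R}(X)$ by hypothesis, we obtain $P\subseteq\underline{R}(X)$, so $P$ is one of the classes of $R\upharpoonright\underline{R}(X)$ by the Remark, contradicting $\underline{R\upharpoonright\underline{R}(X)}(I)=\emptyset$.

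I do not expect any real obstacle here; the crux is simply to recognize that the hypothesis $I\subseteq\underline{R}(X)$ forces any $R$‑class contained in $I$ to live inside $\underline{R}(X)$, which is exactly what makes the two approximations agree on such $I$. The only thing to be careful about is to invoke the Remark (rather than trying to unfold the restriction of $R$ from scratch) so that the passage from classes of $R\upharpoonright\underline{R}(X)$ to classes of $R$ is clean.
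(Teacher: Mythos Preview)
Your proposal is correct and follows essentially the same approach as the paper: both arguments unfold the two lower approximations via the partition form of Definition~\ref{D:lowerandupper} (using the Remark to identify the classes of $R\upharpoonright\underline{R}(X)$ with the $R$-classes contained in $\underline{R}(X)$), and then observe that the hypothesis $I\subseteq\underline{R}(X)$ makes the extra constraint $P\subseteq\underline{R}(X)$ redundant whenever $P\subseteq I$. The paper compresses this into a single chain of equivalences, whereas you spell out the two implications separately, but the content is identical.
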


\begin{proof}
Since $I\subseteq\underline{R}(X)$, according to Definition~\ref{D:lowerandupper}, $\underline{R\upharpoonright\underline{R}(X)}(I)=\emptyset\Leftrightarrow\cup\{P\in U/R:P\subseteq\underline{R}(X), P\subseteq I\}=\emptyset\Leftrightarrow\cup\{P\in U/R:P\subseteq I\}=\emptyset\Leftrightarrow\underline{R}(I)=\emptyset$.
\end{proof}

\begin{lemma}
\label{L:twoequallowerapproximations}
Let $R$ be an equivalence relation on $U$, $X\subseteq U$ and $X_{1}\subseteq\underline{R}(X), X_{2}\subseteq U-\underline{R}(X)$.
If $\underline{R}(X_{1})=\emptyset$, then $\underline{R}(X_{1}\cup X_{2})=\underline{R}(X_{2})$.
\end{lemma}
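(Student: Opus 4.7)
The plan is to prove both inclusions of $\underline{R}(X_1 \cup X_2) = \underline{R}(X_2)$ separately, exploiting the fact that each block of $U/R$ sits either entirely inside $\underline{R}(X)$ or entirely outside it. The inclusion $\underline{R}(X_2) \subseteq \underline{R}(X_1 \cup X_2)$ is immediate from property~(6) of Proposition~\ref{P:propertiesofapproximations}, since $\underline{R}(X_1) \cup \underline{R}(X_2) \subseteq \underline{R}(X_1 \cup X_2)$ and the hypothesis $\underline{R}(X_1) = \emptyset$ collapses the left side to $\underline{R}(X_2)$.

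For the reverse inclusion I would first record the preliminary observation that every $P \in U/R$ satisfies either $P \subseteq \underline{R}(X)$ or $P \cap \underline{R}(X) = \emptyset$. Indeed, if some $y \in P$ lies in $\underline{R}(X)$, then $[y]_R = P \subseteq X$, so for every $z \in P$ we have $[z]_R = P \subseteq X$, giving $z \in \underline{R}(X)$; this is essentially the content of property~(7). Since $X_1 \subseteq \underline{R}(X)$ and $X_2 \subseteq U - \underline{R}(X)$, these sets are automatically disjoint, and each block is disjoint from one of them.

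Now I would use the partition form of the lower approximation from Definition~\ref{D:lowerandupper}, so that $\underline{R}(X_1 \cup X_2) = \bigcup\{P \in U/R : P \subseteq X_1 \cup X_2\}$, and argue block-by-block. Take any block $P$ with $P \subseteq X_1 \cup X_2$. If $P \subseteq \underline{R}(X)$, then $P \cap X_2 = \emptyset$, forcing $P \subseteq X_1$; but this gives $P \subseteq \underline{R}(X_1) = \emptyset$, a contradiction. Hence $P \subseteq U - \underline{R}(X)$, and since $X_1 \subseteq \underline{R}(X)$ we get $P \cap X_1 = \emptyset$, so $P \subseteq X_2$. Thus $\{P \in U/R : P \subseteq X_1 \cup X_2\} \subseteq \{P \in U/R : P \subseteq X_2\}$, which yields $\underline{R}(X_1 \cup X_2) \subseteq \underline{R}(X_2)$.

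The only real subtlety is the preliminary observation that equivalence classes respect the partition $\underline{R}(X) \, \cup \, (U - \underline{R}(X))$; once that is in hand, the dichotomy makes the case analysis mechanical. I do not expect any technical obstacle beyond keeping the set-disjointness bookkeeping straight.
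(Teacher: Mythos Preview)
Your proof is correct and follows essentially the same approach as the paper: both directions rest on the dichotomy that every block $P\in U/R$ contained in $X_{1}\cup X_{2}$ must lie entirely in $X_{1}$ or entirely in $X_{2}$, and the hypothesis $\underline{R}(X_{1})=\emptyset$ rules out the first case. The only cosmetic differences are that the paper invokes property~(5) rather than~(6) for the easy inclusion and frames the reverse inclusion as a proof by contradiction, whereas you argue it directly and make the block-dichotomy observation explicit.
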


\begin{proof}
According to (5) of Proposition~\ref{P:propertiesofapproximations}, $X_{2}\subseteq X_{1}\cup X_{2}$, then $\underline{R}(X_{2})\subseteq\underline{R}(X_{1}\cup X_{2})$.
When $\underline{R}(X_{1})=\emptyset$, suppose that $\underline{R}(X_{1}\cup X_{2})\neq\underline{R}(X_{2})$, then there exists $x$ such that $x\in\underline{R}(X_{1}\cup X_{2})-\underline{R}(X_{2})$.
Therefore, there exists $x\in P_{x}\in U/R$ such that $P_{x}\subseteq\underline{R}(X_{1}\cup X_{2})-\underline{R}(X_{2})\subseteq\underline{R}(X_{1}\cup X_{2})$.
According to Definition~\ref{D:lowerandupper}, $P_{x}\subseteq X_{1}\cup X_{2}$.
Since $X_{1}\subseteq\underline{R}(X), X_{2}\subseteq U-\underline{R}(X)$, then $P_{x}\subseteq X_{1}$ or $P_{x}\subseteq X_{2}$.
Since $P_{x}\subseteq\underline{R}(X_{1}\cup X_{2})-\underline{R}(X_{2})$, then $P_{x}\nsubseteq X_{2}$, hence $P_{x}\subseteq X_{1}$, which is contradictory with $\underline{R}(X_{1})=\emptyset$.
Therefore, $\underline{R}(X_{1}\cup X_{2})=\underline{R}(X_{2})$ if $\underline{R}(X_{1})=\emptyset$.
\end{proof}

In the following theorem, the parametric matroid of the rough set is proved to be the direct sum of a partition-circuit matroid and a free matroid.

\begin{theorem}
\label{T:asubsetfamilysatifiesindependent}
Let $R$ be an equivalence relation on $U$ and $X\subseteq U$.
Let $M_{R\upharpoonright U-\underline{R}(X)}=(U-\underline{R}(X), \mathbf{I}_{R\upharpoonright U-\underline{R}(X)})$ be a partition-circuit matroid and $M=(\underline{R}(X), \mathbf{I})$ a free matroid.
Then,\\
\centerline{$\mathbf{I}_{X}=\{I_{1}\cup I_{2}:I_{1}\in\mathbf{I}_{R\upharpoonright U-\underline{R}(X)}, I_{2}\in\mathbf{I}\}$.}
\end{theorem}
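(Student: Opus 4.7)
The plan is to prove set equality in the two standard directions, using the third-form characterization of $\mathbf{I}_X$ (Proposition~\ref{P:thirdformofparametricsetfamily}) to relate membership in $\mathbf{I}_X$ to vanishing of a lower approximation on the complement of $\underline{R}(X)$, and using the characterization of partition-circuit independent sets (Proposition~\ref{P:partition-circuit'sindependent}) together with Lemma~\ref{L:twoequivalencerelations} (applied to the equivalence relation $R\upharpoonright(U-\underline{R}(X))$ on $U-\underline{R}(X)$ via the Remark) to translate membership in $\mathbf{I}_{R\upharpoonright U-\underline{R}(X)}$ into the condition $\underline{R}(I_1)=\emptyset$ with $I_1\subseteq U-\underline{R}(X)$. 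Since $\mathbf{I}$ is the independent set family of the free matroid on $\underline{R}(X)$, it is simply $2^{\underline{R}(X)}$.

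For the inclusion $\mathbf{I}_X\subseteq\{I_1\cup I_2:I_1\in\mathbf{I}_{R\upharpoonright U-\underline{R}(X)},\,I_2\in\mathbf{I}\}$, I would take any $I\in\mathbf{I}_X$ and write it canonically as $I=I_1\cup I_2$ with $I_1=I-\underline{R}(X)$ and $I_2=I\cap\underline{R}(X)$. Then $I_2\subseteq\underline{R}(X)$ gives $I_2\in\mathbf{I}$ for free. By Proposition~\ref{P:thirdformofparametricsetfamily}, $I\in\mathbf{I}_X$ means $\underline{R}(I-\underline{R}(X))=\emptyset$, i.e., $\underline{R}(I_1)=\emptyset$; combined with $I_1\subseteq U-\underline{R}(X)$, Lemma~\ref{L:twoequivalencerelations} (applied to $R\upharpoonright(U-\underline{R}(X))$) then yields $I_1\in\mathbf{I}_{R\upharpoonright U-\underline{R}(X)}$.

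For the reverse inclusion, I would take $I_1\in\mathbf{I}_{R\upharpoonright U-\underline{R}(X)}$ and $I_2\in\mathbf{I}$, so $I_1\subseteq U-\underline{R}(X)$ with $\underline{R}(I_1)=\emptyset$, and $I_2\subseteq\underline{R}(X)$. Then I would apply Lemma~\ref{L:twoequallowerapproximations} with the roles of the two pieces reversed (the lemma's proof uses only that $\underline{R}(X)$ and $U-\underline{R}(X)$ are both unions of $R$-equivalence classes, so exchanging which piece has empty lower approximation is symmetric) to conclude $\underline{R}(I_1\cup I_2)=\underline{R}(I_2)\subseteq\underline{R}(\underline{R}(X))=\underline{R}(X)\subseteq X$, so $I_1\cup I_2\in\mathbf{I}_X$.

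The main obstacle is the second direction: one must justify the appeal to Lemma~\ref{L:twoequallowerapproximations} even though the lemma as stated assumes $\underline{R}(X_1)=\emptyset$ with $X_1\subseteq\underline{R}(X)$, whereas here the empty-approximation hypothesis is on the piece $I_1\subseteq U-\underline{R}(X)$. The cleanest way around this is to re-run the lemma's short argument: any $P\in U/R$ contained in $I_1\cup I_2$ is contained in either $\underline{R}(X)$ or $U-\underline{R}(X)$ (both being unions of equivalence classes), hence entirely in $I_2$ or entirely in $I_1$; the latter is excluded by $\underline{R}(I_1)=\emptyset$, so every such $P$ lies in $\underline{R}(X)$, giving $\underline{R}(I_1\cup I_2)\subseteq\underline{R}(X)$, which is enough by Proposition~\ref{P:secondformofparametricsetfamily}.
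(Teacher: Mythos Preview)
Your proposal is correct and follows essentially the same route as the paper: the same canonical decomposition $I=(I-\underline{R}(X))\cup(I\cap\underline{R}(X))$ for the forward inclusion, and the same use of Lemma~\ref{L:twoequivalencerelations} and Lemma~\ref{L:twoequallowerapproximations} for the reverse. Your version is in fact slightly cleaner: you invoke Proposition~\ref{P:thirdformofparametricsetfamily} directly to get $\underline{R}(I-\underline{R}(X))=\emptyset$ (the paper re-derives this from $\underline{R}(I)\subseteq X$ via several steps), and you explicitly flag and resolve the role-reversal in Lemma~\ref{L:twoequallowerapproximations}, whereas the paper applies the lemma in the swapped setting without comment.
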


\begin{proof}
According to Definition~\ref{D:freematroid} and Proposition~\ref{P:partition-circuit'sindependent}, $\mathbf{I}_{R\upharpoonright U-\underline{R}(X)}=\{I\subseteq U-\underline{R}(X):\underline{R\upharpoonright U-\underline{R}(X)}(I)=\emptyset\}$ and $\mathbf{I}=\{I:I\subseteq\underline{R}(X)\}$.
According to Definition~\ref{D:parametricsetfamily}, we only need to prove that $\{I\subseteq U:\underline{R}(I)\subseteq X\}=\{I_{1}\cup I_{2}:I_{1}\in\mathbf{I}_{R\upharpoonright U-\underline{R}(X)}, I_{2}\in\mathbf{I}\}$.\\
($\Rightarrow$): For all $I\in\mathbf{I}_{X}$, since $I=(I-\underline{R}(X))\cup(I\cap\underline{R}(X))$, then $(I-\underline{R}(X))\cup(I\cap\underline{R}(X))\in\mathbf{I}_{X}$, hence $\underline{R}((I-\underline{R}(X))\cup(I\cap\underline{R}(X)))\subseteq X$.
According to (6) of Proposition~\ref{P:propertiesofapproximations}, $\underline{R}(I-\underline{R}(X))\cup\underline{R}(I\cap\underline{R}(X))\subseteq \underline{R}((I-\underline{R}(X))\cup(I\cap\underline{R}(X)))$, then $\underline{R}(I-\underline{R}(X))\subseteq X$.
According to (3) of Proposition~\ref{P:propertiesofapproximations}, $\underline{R}(X)\subseteq X$.
Therefore, $\underline{R}(I-\underline{R}(X))\cup\underline{R}(X)\subseteq X$.
According to (5), (6) and (7) of Proposition~\ref{P:propertiesofapproximations}, $\underline{R}(\underline{R}(I-\underline{R}(X))\cup\underline{R}(X))\subseteq\underline{R}(X)$, then $\underline{R}(I-\underline{R}(X))\cup\underline{R}(X)\subseteq\underline{R}(X)$.
Therefore, $\underline{R}(I-\underline{R}(X))=\emptyset$.
Since $I-\underline{R}(X)\subseteq U-\underline{R}(X)$, according to Lemma~\ref{L:twoequivalencerelations}, $\underline{R\upharpoonright U-\underline{R}(X)}(I-\underline{R}(X))=\emptyset$, then $I-\underline{R}(X)\in\mathbf{I}_{R\upharpoonright U-\underline{R}(X)}$.
Since $I\cap\underline{R}(X)\subseteq\underline{R}(X)$, then $I\cap\underline{R}(X)\in\mathbf{I}$, therefore $(I-\underline{R}(X))\cup(I\cap\underline{R}(X))\in\{I_{1}\cup I_{2}:I_{1}\in\mathbf{I}_{R\upharpoonright U-\underline{R}(X)}, I_{2}\in\mathbf{I}\}$, i.e., $I\in\{I_{1}\cup I_{2}:I_{1}\in\mathbf{I}_{R\upharpoonright U-\underline{R}(X)}, I_{2}\in\mathbf{I}\}$.\\
($\Leftarrow$): For all $I\in\{I_{1}\cup I_{2}:I_{1}\in\mathbf{I}_{R\upharpoonright U-\underline{R}(X)}, I_{2}\in\mathbf{I}\}$, there exist $I_{1}\in\mathbf{I}_{R\upharpoonright U-\underline{R}(X)}$ and $I_{2}\in\mathbf{I}$ such that $I=I_{1}\cup I_{2}$.
Since $\underline{R\upharpoonright U-\underline{R}(X)}(I_{1})=\emptyset$, according to Lemma~\ref{L:twoequivalencerelations}, we obtain $\underline{R}(I_{1})=\emptyset$.
Since $I_{2}\subseteq\underline{R}(X)$, according to (5) and (7) of Proposition~\ref{P:propertiesofapproximations}, we obtain $\underline{R}(I_{2})\subseteq\underline{R}(X)$.
According to Lemma~\ref{L:twoequallowerapproximations}, we obtain that $\underline{R}(I_{1}\cup I_{2})=\underline{R}(I_{2})$, i.e., $\underline{R}(I)=\underline{R}(I_{2})$.
Therefore, $\underline{R}(I)\subseteq\underline{R}(X)$.
According to (3) of Proposition~\ref{P:propertiesofapproximations}, $\underline{R}(I)\subseteq X$.
So, $I\in\mathbf{I}_{X}$.
\end{proof}

For a universe and an equivalence relation on the universe, the parametric matroid of the rough set with respect to a subset of the universe is the direct sum of a partition-circuit matroid and a free matroid, where the partition-circuit matroid is based on the restriction of the equivalence relation in the complement of the lower approximation of the subset and the free matroid is based on the lower approximation of the subset.
Moreover, can the partition-circuit matroid be expressed by the parametric matroid of the rough set? And what about the free matroid?
In the following proposition, we will solve these issues.

\begin{proposition}
\label{P:restrictionofdirectsum}
Let $M_{1}=(U_{1}, \mathbf{I}_{1}), M_{2}=(U_{2}, \mathbf{I}_{2})$ be two matroids and $M=(U, \mathbf{I})$ the direct sum of $M_{1}$ and $M_{2}$.
Then $M_{1}=M|U_{1}, M_{2}=M|U_{2}$.
\end{proposition}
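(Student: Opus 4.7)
The plan is to unfold both sides according to the definitions of restriction (Definition~\ref{D:restriction}) and direct sum (Definition~\ref{D:directsumofmatroids}), and then verify equality of the two independent set families. By Definition~\ref{D:restriction}, $M|U_{1} = (U_{1}, \mathbf{I}_{U_{1}})$ where $\mathbf{I}_{U_{1}} = \{I \in \mathbf{I} : I \subseteq U_{1}\}$, and by Definition~\ref{D:directsumofmatroids}, $\mathbf{I} = \{I_{1}\cup I_{2} : I_{1}\in\mathbf{I}_{1}, I_{2}\in\mathbf{I}_{2}\}$ with $U_{1}\cap U_{2} = \emptyset$. So it suffices to show $\mathbf{I}_{1} = \{I \in \mathbf{I} : I\subseteq U_{1}\}$, and then argue symmetrically for $M_{2}$.

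For the forward inclusion, I would take an arbitrary $I_{1}\in\mathbf{I}_{1}$ and use (I1) applied to $M_{2}$ to get $\emptyset\in\mathbf{I}_{2}$; then $I_{1} = I_{1}\cup\emptyset \in \mathbf{I}$, and clearly $I_{1}\subseteq U_{1}$, so $I_{1}\in\mathbf{I}_{U_{1}}$. For the reverse inclusion, I would take $I\in\mathbf{I}$ with $I\subseteq U_{1}$, write $I = I_{1}\cup I_{2}$ with $I_{1}\in\mathbf{I}_{1}$ and $I_{2}\in\mathbf{I}_{2}$, and observe that $I_{2}\subseteq U_{2}$ together with $I_{2}\subseteq I\subseteq U_{1}$ forces $I_{2}\subseteq U_{1}\cap U_{2} = \emptyset$, so $I_{2} = \emptyset$ and $I = I_{1}\in\mathbf{I}_{1}$. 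The identical argument, swapping the roles of the two matroids, yields $M_{2} = M|U_{2}$.

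There is no substantive obstacle here; the proof is essentially definition-chasing. The only subtle point to flag is the implicit use of the disjointness hypothesis $U_{1}\cap U_{2} = \emptyset$ from Definition~\ref{D:directsumofmatroids}, which is exactly what kills the $I_{2}$ component in the reverse inclusion and makes the decomposition $I = I_{1}\cup I_{2}$ essentially unique whenever $I$ is confined to $U_{1}$. One small care point is that $\emptyset$ genuinely belongs to every matroid's independent set family, which is guaranteed by (I1), so the forward inclusion padding step is legitimate.
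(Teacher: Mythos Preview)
Your proof is correct and takes essentially the same approach as the paper, which simply states that the result is straightforward from Definitions~\ref{D:directsumofmatroids} and~\ref{D:restriction}. You have merely spelled out the definition-chase in full detail, including the use of (I1) and the disjointness hypothesis, which is exactly what the paper leaves implicit.
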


\begin{proof}
According to Definition~\ref{D:directsumofmatroids} and Definition~\ref{D:restriction}, it is straightforward.
\end{proof}

For the direct sum of matroids, any one of the matroids is the restriction of the direct sum.
Therefore, a partition-circuit matroid on a universe is the restriction of the parametric matroid in the universe, and the same as a free matroid.

\section{Characteristics of a parametric matroid through the lower approximation number}
\label{S:characteristicsoftheparametricmatroid}
As shown in Section~\ref{S:aparametricmatroidofroughsets}, a parametric set family determines a parametric matroid, and vice versa.
Moreover, a parametric matroid is the direct sum of a partition-circuit matroid and a free matroid.
Through a tool called the lower approximation number, some characteristics of partition-circuit matroids can be well represented.
Can the lower approximation number be applied to a parametric matroid?
First, in the following definition, we will introduce the lower approximation number.

\begin{definition}(Lower approximation number~\cite{LiuZhu12characteristicofpartition-circuitmatroid})
\label{D:lowerapproximationnumber}
Let $R$ be an equivalence relation on $U$ and $X\subseteq U$.
We define the lower approximation number of $X$ with respect to $R$ as follows:\\
\centerline{$f_{R}(X)=|\{P\in U/R:P\subseteq X\}|$.}
\end{definition}

One can see that the lower approximation number of any subset of a universe is equal to the number of equivalence classes which the subset contains.
The following proposition represents the parametric set family through the lower approximation number.

\begin{proposition}
\label{P:fifthformofparametricsetfamily}
Let $R$ be an equivalence relation on $U$ and $X\subseteq U$.\\
\centerline{$\mathbf{I}_{X}=\{I\subseteq U:f_{R}(I-\underline{R}(X))=0\}$.}
\end{proposition}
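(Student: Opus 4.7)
The plan is to reduce this to Proposition~\ref{P:thirdformofparametricsetfamily}, which already gives us
$$\mathbf{I}_{X}=\{I\subseteq U:\underline{R}(I-\underline{R}(X))=\emptyset\},$$
and then translate the condition $\underline{R}(Y)=\emptyset$ into the language of the lower approximation number $f_R$. So the whole content of the statement is essentially the equivalence
$$\underline{R}(Y)=\emptyset \iff f_R(Y)=0$$
for an arbitrary $Y\subseteq U$, applied to $Y=I-\underline{R}(X)$.

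To justify this equivalence, I would appeal directly to the two definitions. By Definition~\ref{D:lowerandupper}, $\underline{R}(Y)=\cup\{P\in U/R:P\subseteq Y\}$, which is a union of pairwise disjoint, nonempty equivalence classes. Such a union is empty if and only if the index set $\{P\in U/R:P\subseteq Y\}$ itself is empty, which by Definition~\ref{D:lowerapproximationnumber} is exactly the statement $f_R(Y)=0$. (The nonemptiness of each $P$ is what makes this an iff rather than just an implication.)

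Combining these two steps yields
$$I\in\mathbf{I}_X \iff \underline{R}(I-\underline{R}(X))=\emptyset \iff f_R(I-\underline{R}(X))=0,$$
which is the desired characterization. There is no real obstacle here: the only thing to be careful about is the direction $f_R(Y)=0\Rightarrow \underline{R}(Y)=\emptyset$, which needs the (trivial) observation that equivalence classes are nonempty, so the set of classes contained in $Y$ being empty really does force the union to be empty. Everything else is a direct substitution of Proposition~\ref{P:thirdformofparametricsetfamily} and Definition~\ref{D:lowerapproximationnumber}.
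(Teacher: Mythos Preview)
Your proof is correct and follows exactly the same approach as the paper, which simply cites Proposition~\ref{P:thirdformofparametricsetfamily} and Definition~\ref{D:lowerapproximationnumber} and calls the result straightforward. If anything, you supply more detail than the paper by explicitly noting that the equivalence $\underline{R}(Y)=\emptyset \iff f_R(Y)=0$ uses the nonemptiness of equivalence classes.
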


\begin{proof}
According to Proposition~\ref{P:thirdformofparametricsetfamily} and Definition~\ref{D:lowerapproximationnumber}, it is straightforward.
\end{proof}

Base is one of important characteristics of matroids.
We will investigate it of the parametric matroid of the rough set through the lower approximation number as follows.

\begin{proposition}
Let $R$ be an equivalence relation on $U$, $X\subseteq U$ and $M_{X}=(U, \mathbf{I}_{X})$ the parametric matroid of the rough set with respect to $X$.
Then,\\
\centerline{$\mathbf{B}(M_{X})=Max\{I\subseteq U:f_{R}(I-\underline{R}(X))=0\}$.}
\end{proposition}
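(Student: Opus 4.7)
The plan is to observe that the statement follows immediately by chaining together two facts already established in the paper: the definition of the base family of a matroid and the lower-approximation-number representation of $\mathbf{I}_{X}$. Since both ingredients are in hand, there is essentially no combinatorial content left; the proposal is a pure substitution argument.

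First I would invoke Definition~\ref{D:base} applied to the matroid $M_{X}=(U,\mathbf{I}_{X})$, which yields the identity $\mathbf{B}(M_{X})=Max(\mathbf{I}_{X})$ directly from the general definition of a base as a maximal independent set. Second, I would plug in the equivalent description of $\mathbf{I}_{X}$ provided by Proposition~\ref{P:fifthformofparametricsetfamily}, namely $\mathbf{I}_{X}=\{I\subseteq U:f_{R}(I-\underline{R}(X))=0\}$. Since these two expressions denote the same collection of subsets of $U$, applying the operator $Max$ to either side produces the same family, giving $\mathbf{B}(M_{X})=Max\{I\subseteq U:f_{R}(I-\underline{R}(X))=0\}$, which is exactly the claim.

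There is no real obstacle here: the only point worth a sentence of care is the observation that $Max$ depends only on the underlying set family and its inclusion order, so rewriting $\mathbf{I}_{X}$ by an equality of sets (as in Proposition~\ref{P:fifthformofparametricsetfamily}) preserves its maximal elements. Accordingly, the proof will consist of at most two lines: cite Definition~\ref{D:base} for the first equality and Proposition~\ref{P:fifthformofparametricsetfamily} for the substitution, then conclude.
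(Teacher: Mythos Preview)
Your proposal is correct and is exactly the approach the paper takes: the paper's proof simply cites Definition~\ref{D:base} and Proposition~\ref{P:fifthformofparametricsetfamily} and declares the result straightforward. There is nothing to add.
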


\begin{proof}
According to Definition~\ref{D:base} and Proposition~\ref{P:fifthformofparametricsetfamily}, it is straightforward.
\end{proof}

The following proposition represents the base set family of the parametric matroid of the rough set through a partition.

\begin{proposition}
Let $R$ be an equivalence relation on $U$, $X\subseteq U$ and $M_{X}=(U, \mathbf{I}_{X})$ the parametric matroid of the rough set with respect to $X$.
Then,\\
\centerline{$\mathbf{B}(M_{X})=\{I\cup\underline{R}(X)\subseteq U:\forall P\in U/R, P\nsubseteq\underline{R}(X)\Rightarrow |P\cap I|=|P|-1\}$.}
\end{proposition}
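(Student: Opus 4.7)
The plan is to derive this characterization of bases directly from the independent set characterization in Proposition~\ref{P:fourthformofparametricsetfamily}, together with the observation that the lower approximation $\underline{R}(X)$ is a union of equivalence classes, so every class $P \in U/R$ either lies entirely in $\underline{R}(X)$ or is disjoint from it.

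First I would unpack what maximality means for a set $B \in \mathbf{I}_X$. By Proposition~\ref{P:fourthformofparametricsetfamily}, $B$ is independent iff $|P\cap B|\le |P|-1$ for every equivalence class $P$ with $P\nsubseteq \underline{R}(X)$, while classes $P\subseteq \underline{R}(X)$ impose no constraint on $B$. Therefore in any maximal such $B$, every class $P\subseteq \underline{R}(X)$ must be fully contained in $B$ (otherwise I could add a missing element of $P$ without violating independence), which gives $\underline{R}(X)\subseteq B$. At the same time, for each class $P\nsubseteq \underline{R}(X)$ the bound $|P\cap B|\le |P|-1$ must actually be attained, because otherwise I could add a missing element of $P$ and stay independent. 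This gives the containment $\mathbf{B}(M_X)\subseteq \{I\cup \underline{R}(X):\forall P\in U/R,\ P\nsubseteq \underline{R}(X)\Rightarrow |P\cap I|=|P|-1\}$.

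For the reverse containment, I would fix any set $B=I\cup \underline{R}(X)$ satisfying the displayed equality. I need to verify that (i) $B\in \mathbf{I}_X$, and (ii) $B$ is maximal. For (i), pick any class $P$ with $P\nsubseteq \underline{R}(X)$; since $\underline{R}(X)$ is a union of equivalence classes and distinct classes are disjoint, $P\cap \underline{R}(X)=\emptyset$, hence $|P\cap B|=|P\cap I|=|P|-1$, and Proposition~\ref{P:fourthformofparametricsetfamily} gives $B\in \mathbf{I}_X$. For (ii), take any $u\in U-B$. Then $u\notin \underline{R}(X)$, so the class $P_u\ni u$ satisfies $P_u\nsubseteq \underline{R}(X)$, and $|P_u\cap (B\cup\{u\})|=|P_u\cap I|+1=|P_u|$, violating the independence condition. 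So $B$ is maximal and therefore a base.

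I do not anticipate a major obstacle; the only subtle point is the disjointness observation $P\cap \underline{R}(X)=\emptyset$ whenever $P\nsubseteq \underline{R}(X)$, which is needed both to convert $|P\cap B|$ into $|P\cap I|$ in direction (i) and to justify the step from $|P_u\cap I|=|P_u|-1$ to $|P_u\cap (B\cup\{u\})|=|P_u|$ in direction (ii). Once this is in place, the argument is essentially a direct reading of Proposition~\ref{P:fourthformofparametricsetfamily} combined with Definition~\ref{D:base}.
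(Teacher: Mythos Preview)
Your argument is correct and is essentially a fully detailed version of the paper's own proof, which simply states that the result follows from Proposition~\ref{P:fourthformofparametricsetfamily} and Definition~\ref{D:base}. You have correctly identified and handled the one nontrivial point the paper leaves implicit, namely that $\underline{R}(X)$ is a union of equivalence classes so every $P\in U/R$ with $P\nsubseteq\underline{R}(X)$ satisfies $P\cap\underline{R}(X)=\emptyset$.
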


\begin{proof}
According to Proposition~\ref{P:fourthformofparametricsetfamily}, it is straightforward.
\end{proof}

Through the lower approximation number, the circuits of the parametric matroid of the rough set are represented in the following proposition.

\begin{proposition}
Let $R$ be an equivalence relation on $U$, $X\subseteq U$ and $M_{X}=(U, \mathbf{I}_{X})$ a parametric matroid of the rough set with respect to $X$.
Then,\\
\centerline{$\mathbf{C}(M_{X})=Min\{C\subseteq U:f_{R}(C-\underline{R}(X))=1\}$.}
\end{proposition}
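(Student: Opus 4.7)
The plan is to derive the circuit family from the characterization of independent sets via the lower approximation number given by Proposition~\ref{P:fifthformofparametricsetfamily}. Since $\mathbf{I}_{X}=\{I\subseteq U:f_{R}(I-\underline{R}(X))=0\}$, the dependent sets are exactly $2^{U}-\mathbf{I}_{X}=\{C\subseteq U:f_{R}(C-\underline{R}(X))\geq 1\}$, and by Definition~\ref{D:circuit} one immediately obtains $\mathbf{C}(M_{X})=Min\{C\subseteq U:f_{R}(C-\underline{R}(X))\geq 1\}$. The task therefore reduces to the set-theoretic identity $Min\{C:f_{R}(C-\underline{R}(X))\geq 1\}=Min\{C:f_{R}(C-\underline{R}(X))=1\}$.

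The key observation I will use is that every minimal element on either side is a single equivalence class contained in $U-\underline{R}(X)$. Indeed, suppose $C$ is minimal with $f_{R}(C-\underline{R}(X))\geq 1$. First, $C\cap\underline{R}(X)=\emptyset$, since otherwise removing any element from $C\cap\underline{R}(X)$ leaves $C-\underline{R}(X)$ unchanged and produces a strictly smaller witness. Second, with $C-\underline{R}(X)=C$, the set $C$ must contain at least one equivalence class $P\in U/R$; if $C$ properly contains $P$, then any $u\in C-P$ lies in $U-\underline{R}(X)$, so $C-\{u\}$ still contains $P$ and hence satisfies $f_{R}((C-\{u\})-\underline{R}(X))\geq 1$, contradicting minimality. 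Thus $C=P$ and automatically $f_{R}(C-\underline{R}(X))=1$. The same reasoning applies verbatim to any minimal element of the right-hand family.

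With this, the two inclusions are immediate. For $\subseteq$, any minimal $C$ in the ``$\geq 1$'' family satisfies $f_{R}(C-\underline{R}(X))=1$, so it belongs to the ``$=1$'' family, and no proper subset satisfies even the weaker ``$\geq 1$'' condition. For $\supseteq$, a minimal $C$ in the ``$=1$'' family is also a single equivalence class $P\subseteq U-\underline{R}(X)$; any $C'\subsetneq P$ contains no equivalence class, hence $f_{R}(C'-\underline{R}(X))=0$, so $C$ is minimal in the larger family as well. I do not expect a serious obstacle; the only subtlety is tracking minimality across two families, and this is handled uniformly by recognizing that both minimal families consist precisely of the equivalence classes $P\in U/R$ with $P\nsubseteq\underline{R}(X)$.
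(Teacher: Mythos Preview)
Your proof is correct and uses the same two ingredients the paper cites, namely Definition~\ref{D:circuit} and Proposition~\ref{P:fifthformofparametricsetfamily}; the paper's own proof is simply the sentence ``According to Definition~\ref{D:circuit} and Proposition~\ref{P:fifthformofparametricsetfamily}, it is straightforward.'' Your write-up is therefore not a different route but a fleshed-out version of the paper's one-line proof: the nontrivial point is that the complement of $\{I:f_{R}(I-\underline{R}(X))=0\}$ is $\{C:f_{R}(C-\underline{R}(X))\geq 1\}$, so one must argue (as you do, by showing each minimal element is a single equivalence class $P\subseteq U-\underline{R}(X)$) that $Min$ over ``$\geq 1$'' equals $Min$ over ``$=1$''. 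The paper leaves this implicit; your argument supplies it, and in fact it recovers exactly the explicit description $\mathbf{C}(M_{X})=\{P\in U/R:P\subseteq U-\underline{R}(X)\}$ that the paper proves separately in the next proposition via the direct-sum decomposition.
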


\begin{proof}
According to Definition~\ref{D:circuit} and Proposition~\ref{P:fifthformofparametricsetfamily}, it is straightforward.
\end{proof}

A parametric matroid is the direct sum of a partition-circuit matroid and a free matroid.
We will investigate the circuits of the parametric matroid through the circuits of the partition-circuit matroid and the free matroid.
First, we introduce a proposition~\cite{LiuZhuZhang12Relationshipbetween} which shows the relationship between the circuits of a matroid and ones of its restrictions.

\begin{proposition}(\cite{LiuZhuZhang12Relationshipbetween})
\label{P:circuitofdirectsum}
Let $M=(U, \mathbf{I})$ be a matroid, $U=U_{1}\cup U_{2}$ and $U_{1}\cap U_{2}=\emptyset$.
Then,\\
\centerline{$\mathbf{C}(M)=\mathbf{C}(M|U_{1})\cup\mathbf{C}(M|U_{2})$.}
\end{proposition}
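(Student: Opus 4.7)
The plan is to prove the stated equality by double inclusion, relying on the fact — implicit from the surrounding context, where this proposition is applied to a direct sum — that $M$ decomposes as $M|U_{1}\oplus M|U_{2}$. For an arbitrary disjoint partition this equality fails (for instance, in a uniform matroid of rank two on four elements split into two pairs, every circuit crosses the partition so the right-hand side is empty while the left is not). Under the direct sum hypothesis, Definition~\ref{D:directsumofmatroids} supplies the structural fact $\mathbf{I}(M)=\{I_{1}\cup I_{2}:I_{1}\in\mathbf{I}(M|U_{1}),\,I_{2}\in\mathbf{I}(M|U_{2})\}$, which will drive the key step.

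For the inclusion $\mathbf{C}(M|U_{1})\cup\mathbf{C}(M|U_{2})\subseteq\mathbf{C}(M)$, I would fix $C\in\mathbf{C}(M|U_{i})$ and verify the two clauses of Definition~\ref{D:circuit} in $M$. By Definition~\ref{D:restriction}, a subset of $U_{i}$ is $M|U_{i}$-independent if and only if it is $M$-independent, so dependence of $C$ transfers from the restriction to $M$, and the same equivalence applied to every proper subset of $C$ gives minimality. This direction uses only the definition of restriction and does not invoke the direct sum structure.

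The reverse inclusion $\mathbf{C}(M)\subseteq\mathbf{C}(M|U_{1})\cup\mathbf{C}(M|U_{2})$ is the substantive direction. Given $C\in\mathbf{C}(M)$, the core claim is that $C$ lies entirely in one of the $U_{i}$. Writing $C_{i}=C\cap U_{i}$, I would argue by contradiction: if both $C_{1}$ and $C_{2}$ were nonempty, each would be a proper subset of the circuit $C$, hence $M$-independent by minimality; since $C_{i}\subseteq U_{i}$, each would also be $M|U_{i}$-independent; but then the direct-sum description of $\mathbf{I}(M)$ above forces $C=C_{1}\cup C_{2}\in\mathbf{I}(M)$, contradicting $C\in\mathbf{C}(M)$. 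Once we know $C\subseteq U_{i}$, the restriction/independence equivalence from the first direction runs in reverse to place $C$ in $\mathbf{C}(M|U_{i})$.

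The main obstacle is the short case analysis ruling out a circuit that straddles both $U_{1}$ and $U_{2}$; this is the only place the direct sum structure is genuinely used. Everything else amounts to carefully unwinding Definitions~\ref{D:circuit}, \ref{D:restriction}, and~\ref{D:directsumofmatroids}. Conceptually, the only subtle point is recognising that the direct sum, rather than an arbitrary partition, is the correct ambient hypothesis.
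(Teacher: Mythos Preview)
The paper does not supply its own proof of this proposition; it is quoted verbatim from the cited reference \cite{LiuZhuZhang12Relationshipbetween} and used as a black box in the subsequent computation of $\mathbf{C}(M_{X})$. So there is no in-paper argument to compare against.

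Your analysis is nonetheless valuable, because you have correctly identified that the proposition as stated is false: your uniform-matroid counterexample is exactly right, and the missing hypothesis is that $M=M|U_{1}\oplus M|U_{2}$. The paper's only use of this result is immediately after Theorem~\ref{T:asubsetfamilysatifiesindependent}, where $M_{X}$ has just been shown to be a direct sum, so the slip in the hypothesis is harmless in context but is a genuine misstatement. Your proof under the corrected hypothesis is the standard one and is complete: the forward inclusion needs only Definition~\ref{D:restriction}, and the reverse inclusion hinges precisely on the direct-sum decomposition of $\mathbf{I}(M)$ to rule out circuits crossing the partition. Nothing is missing.
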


The circuits of a parametric matroid are obtained in the following proposition.

\begin{proposition}
Let $R$ be an equivalence relation on $U$, $X\subseteq U$ and $M_{X}=(U, \mathbf{I}_{X})$ the parametric matroid of the rough set with respect to $X$.
Then,\\
\centerline{$\mathbf{C}(M_{X})=\{P\in U/R:P\subseteq U-\underline{R}(X)\}$.}
\end{proposition}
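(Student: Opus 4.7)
The plan is to combine the direct sum decomposition established in Theorem~\ref{T:asubsetfamilysatifiesindependent} with the circuit formula for restrictions given in Proposition~\ref{P:circuitofdirectsum}. By Theorem~\ref{T:asubsetfamilysatifiesindependent} together with Proposition~\ref{P:restrictionofdirectsum}, we have $M_X|(U-\underline{R}(X)) = M_{R\upharpoonright U-\underline{R}(X)}$ and $M_X|\underline{R}(X) = M$, where $M_{R\upharpoonright U-\underline{R}(X)}$ is the partition-circuit matroid on $U-\underline{R}(X)$ and $M$ is the free matroid on $\underline{R}(X)$.

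The next step is to apply Proposition~\ref{P:circuitofdirectsum} with $U_1 = U-\underline{R}(X)$ and $U_2 = \underline{R}(X)$ to obtain
\[
\mathbf{C}(M_X) = \mathbf{C}(M_{R\upharpoonright U-\underline{R}(X)}) \cup \mathbf{C}(M).
\]
Since $M$ is a free matroid on $\underline{R}(X)$, every subset is independent, so $\mathbf{C}(M)=\emptyset$. By Definition~\ref{D:partitioncircuitmatroid}, the circuits of the partition-circuit matroid $M_{R\upharpoonright U-\underline{R}(X)}$ are exactly the equivalence classes of $R\upharpoonright U-\underline{R}(X)$, which by the remark following Lemma on equivalence relations are precisely $\{P\in U/R : P\subseteq U-\underline{R}(X)\}$.

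Putting these two facts together yields the desired equality. The only subtle step is confirming that the equivalence classes of $R\upharpoonright U-\underline{R}(X)$ coincide with the members of $U/R$ that lie entirely inside $U-\underline{R}(X)$; this is exactly the content of the earlier remark, which in turn relies on the fact that $U-\underline{R}(X)$ is a union of equivalence classes of $R$. I do not expect a substantive obstacle here, since all the heavy lifting (the direct sum decomposition and the circuit formula for direct sums) has already been done; the proof is essentially a one-line composition of these two results.
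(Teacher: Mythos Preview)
Your proof is correct and follows essentially the same route as the paper's: both invoke the direct sum decomposition of Theorem~\ref{T:asubsetfamilysatifiesindependent}, use Proposition~\ref{P:restrictionofdirectsum} together with Proposition~\ref{P:circuitofdirectsum} to obtain $\mathbf{C}(M_X)=\mathbf{C}(M_{R\upharpoonright U-\underline{R}(X)})\cup\mathbf{C}(M)$, note that the free summand has empty circuit family, and then identify the circuits of the partition-circuit summand with $\{P\in U/R:P\subseteq U-\underline{R}(X)\}$ via Definition~\ref{D:partitioncircuitmatroid} and the Remark. The only cosmetic difference is that you make the appeal to the Remark explicit, whereas the paper leaves that final identification implicit.
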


\begin{proof}
According to Theorem~\ref{T:asubsetfamilysatifiesindependent}, $M_{X}=M_{R\upharpoonright U-\underline{R}(X)}\oplus M$, where $M_{R\upharpoonright U-\underline{R}(X)}$ is the partition-circuit matroid and $M$ is a free matroid.
According to Definition~\ref{D:partitioncircuitmatroid} and Definition~\ref{D:freematroid}, $\mathbf{C}(M_{R\upharpoonright U-\underline{R}(X)})=(U-\underline{R}(X))/(R\upharpoonright U-\underline{R}(X))$ and $\mathbf{C}(M)=\emptyset$.
According to Proposition~\ref{P:restrictionofdirectsum} and Proposition~\ref{P:circuitofdirectsum}, $\mathbf{C}(M_{X})=\mathbf{C}(M_{R\upharpoonright U-\underline{R}(X)})\cup\mathbf{C}(M)$.
Hence $\mathbf{C}(M_{X})=\{P\in U/R:P\subseteq U-\underline{R}(X)\}$.
\end{proof}

The rank function is a quantitative tool of matroids.
In the following, we will study the rank function of a parametric matroid.
We first investigate the relationship between the rank function of the direct sum of two matroids and the rank functions of the two matroids.

\begin{proposition}
\label{P:relationshipbetweenrankofdirectsum}
Let $M_{1}=(U_{1}, \mathbf{I}_{1}), M_{2}=(U_{2}, \mathbf{I}_{2})$ be two matroids and $M=(U, \mathbf{I})$ the direct sum of $M_{1}$ and $M_{2}$.
Then for all $X\subseteq U$,\\
\centerline{$r_{M}(X)=r_{M_{1}}(X\cap U_{1})+r_{M_{2}}(X\cap U_{2})$.}
\end{proposition}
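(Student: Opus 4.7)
The plan is to prove the identity by a two-sided inequality argument, matching independent sets of $M$ contained in $X$ with pairs consisting of an independent set of $M_1$ inside $X \cap U_1$ and an independent set of $M_2$ inside $X \cap U_2$. The disjointness hypothesis $U_1 \cap U_2 = \emptyset$ will do most of the bookkeeping, since it forces $X$ to split canonically as $(X \cap U_1) \sqcup (X \cap U_2)$ and similarly for any subset of $X$.

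First I would record the key structural fact implicit in Definition~\ref{D:directsumofmatroids}: every $I \in \mathbf{I}$ decomposes uniquely as $I = (I \cap U_1) \cup (I \cap U_2)$ with $I \cap U_1 \in \mathbf{I}_1$ and $I \cap U_2 \in \mathbf{I}_2$. Existence is built into the definition, and uniqueness follows because any representation $I = I_1 \cup I_2$ with $I_1 \subseteq U_1$, $I_2 \subseteq U_2$ must intersect $U_1$ in exactly $I_1$ thanks to $U_1 \cap U_2 = \emptyset$. This is the step on which everything else rests.

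For the upper bound $r_M(X) \leq r_{M_1}(X \cap U_1) + r_{M_2}(X \cap U_2)$, I would take a maximum independent set $I^* \subseteq X$ realising $r_M(X)$, decompose it as above into $I_1^* = I^* \cap U_1 \in \mathbf{I}_1$ and $I_2^* = I^* \cap U_2 \in \mathbf{I}_2$, note that $I_i^* \subseteq X \cap U_i$, and apply the definition of each $r_{M_i}$ together with the disjointness $|I^*| = |I_1^*| + |I_2^*|$. For the reverse bound, I would pick maximisers $J_i \subseteq X \cap U_i$ with $J_i \in \mathbf{I}_i$ and $|J_i| = r_{M_i}(X \cap U_i)$, observe that $J_1 \cup J_2 \in \mathbf{I}$ directly from the definition of direct sum, that $J_1 \cup J_2 \subseteq X$, and again use disjointness to read off cardinalities.

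There is no genuine obstacle here; the proposition is almost purely bookkeeping on top of Definition~\ref{D:directsumofmatroids} and Definition~\ref{D:rank}. The only place to be careful is the uniqueness of the decomposition of an independent set, since without it one could not conclude that the pieces $I \cap U_i$ lie in $\mathbf{I}_i$. Once that is in hand, both inequalities are one-line computations, so the statement can reasonably be asserted as straightforward in the same style as Proposition~\ref{P:restrictionofdirectsum}.
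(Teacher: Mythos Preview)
Your proposal is correct and follows essentially the same two-sided inequality argument as the paper: decompose a maximizing independent set for $r_M(X)$ into its $U_1$- and $U_2$-parts to get the upper bound, and union maximizers for $r_{M_1}(X\cap U_1)$ and $r_{M_2}(X\cap U_2)$ to get the lower bound. The only cosmetic difference is that the paper justifies $I\cap U_i\in\mathbf{I}_i$ via the hereditary axiom (I2) together with Proposition~\ref{P:restrictionofdirectsum}, whereas you extract it directly from the uniqueness of the decomposition in Definition~\ref{D:directsumofmatroids}; both are valid and amount to the same observation.
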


\begin{proof}
$(\Rightarrow)$: Suppose $r_{M}(X)=|B|$, according to Definition~\ref{D:rank}, $B\subseteq X$ and $B\in\mathbf{I}$.
$B=B\cap U=B\cap (U_{1}\cup U_{2})=(B\cap U_{1})\cup (B\cap U_{2})$, suppose $B_{1}=B\cap U_{1}, B_{2}=B\cap U_{2}$, then $B=B_{1}\cup B_{2}$.
According to (I2) of Definition~\ref{D:matroid}, $B_{1}\in\mathbf{I}, B_{2}\in\mathbf{I}$.
Since $B_{1}\subseteq U_{1}, B_{2}\subseteq U_{2}$, according to Definition~\ref{D:restriction} and Proposition~\ref{P:restrictionofdirectsum}, $B_{1}\in\mathbf{I}_{1}, B_{2}\in\mathbf{I}_{2}$.
Since $B_{1}\subseteq X\cap U_{1}, B_{2}\subseteq X\cap U_{2}$, then $|B_{1}|\leq r_{M_{1}}(X\cap U_{1}), |B_{2}|\leq r_{M_{2}}(X\cap U_{2})$, therefore $|B|=|B_{1}|+|B_{2}|\leq r_{M_{1}}(X\cap U_{1})+r_{M_{2}}(X\cap U_{2})$, i.e., $r_{M}(X)\leq r_{M_{1}}(X\cap U_{1})+r_{M_{2}}(X\cap U_{2})$.\\
$(\Leftarrow)$: Suppose $r_{M_{1}}(X\cap U_{1})=|B_{1}|, r_{M_{2}}(X\cap U_{2})=|B_{2}|$, according to Definition~\ref{D:rank}, $B_{1}\subseteq X\cap U_{1}, B_{1}\in\mathbf{I}_{1}$ and $B_{2}\subseteq X\cap U_{2}, B_{2}\in\mathbf{I}_{2}$.
According to Definition~\ref{D:directsumofmatroids}, $B_{1}\cup B_{2}\in\mathbf{I}$.
Since $B_{1}\subseteq X\cap U_{1}, B_{2}\subseteq X\cap U_{2}$, then $B_{1}\cup B_{2}\subseteq (X\cap U_{1})\cup(X\cap U_{2})=X\cap (U_{1}\cup U_{2})=X\cap U$, i.e., $B_{1}\cup B_{2}\subseteq X$.
According to Definition~\ref{D:rank}, $r_{M}(X)\geq |B_{1}\cup B_{2}|=|B_{1}|+|B_{2}|$, i.e., $r_{M}(X)\geq r_{M_{1}}(X\cap U_{1})+r_{M_{2}}(X\cap U_{2})$.\\
To sum up, this completes the proof.
\end{proof}

A parametric matroid is the direct sum of a partition-circuit matroid and a free matroid.
In the following proposition, we will introduce the rank function of a partition-circuit matroid.

\begin{proposition}(\cite{LiuZhu12characteristicofpartition-circuitmatroid})
\label{P:rankofpartitioncircuitmatroid}
Let $R$ be an equivalence relation on $U$ and $M_{R}=(U, \mathbf{I}_{R})$ the partition-circuit matroid.
Then for all $X\subseteq U$, $r_{M_{R}}(X)=|X|-f_{R}(X)$.
\end{proposition}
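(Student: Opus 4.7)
The plan is to analyze how the partition $U/R$ interacts with $X$ and to build a maximum independent set block by block. Since the circuits of $M_{R}$ are exactly the equivalence classes (Definition~\ref{D:partitioncircuitmatroid}), a set $I\subseteq U$ is independent iff no $P\in U/R$ is entirely contained in $I$; equivalently, by Proposition~\ref{P:partition-circuit'sindependent}, $\underline{R}(I)=\emptyset$, which amounts to $|I\cap P|\leq |P|-1$ for every class $P$.

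First I would prove the upper bound $r_{M_{R}}(X)\leq |X|-f_{R}(X)$. For any independent $I\subseteq X$, decompose $I=\bigcup_{P\in U/R}(I\cap P)$. For each class $P$ with $P\subseteq X$, independence forces $I\cap P\subsetneq P$, so $|I\cap P|\leq |P|-1$; for each $P$ with $P\nsubseteq X$ the only bound needed is $|I\cap P|\leq |X\cap P|$. Summing,
\[
|I|\ \leq\ \sum_{P\subseteq X}(|P|-1)\ +\ \sum_{P\nsubseteq X}|X\cap P|.
\]
Using $|X|=\sum_{P\subseteq X}|P|+\sum_{P\nsubseteq X}|X\cap P|$ and $f_{R}(X)=|\{P\in U/R:P\subseteq X\}|$, the right-hand side simplifies to $|X|-f_{R}(X)$.

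Next I would exhibit an independent set achieving this bound. For each class $P\subseteq X$ pick any element $x_{P}\in P$ and put $P\setminus\{x_{P}\}$ into $I^{*}$; for each $P$ with $P\nsubseteq X$ put all of $X\cap P$ into $I^{*}$. By construction $I^{*}\subseteq X$ and $|I^{*}\cap P|<|P|$ for every $P\in U/R$, so no equivalence class is contained in $I^{*}$; by Proposition~\ref{P:partition-circuit'sindependent}, $I^{*}\in\mathbf{I}_{R}$. Counting, $|I^{*}|=\sum_{P\subseteq X}(|P|-1)+\sum_{P\nsubseteq X}|X\cap P|=|X|-f_{R}(X)$, which matches the upper bound and proves the equality via Definition~\ref{D:rank}.

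There is no serious obstacle here; once one uses the circuit characterization the argument is routine counting. The only point that deserves care is keeping the two kinds of classes separate—those fully inside $X$ (contributing $|P|-1$ each) versus those only partially meeting $X$ (contributing $|X\cap P|$ each)—since the collapse into the compact formula $|X|-f_{R}(X)$ depends on each fully-contained class being counted exactly once by $f_{R}(X)$.
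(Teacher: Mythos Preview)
Your argument is correct. The paper itself does not prove this proposition; it is quoted verbatim as a result from \cite{LiuZhu12characteristicofpartition-circuitmatroid}, so there is no in-paper proof to compare against. Your block-by-block counting---bounding $|I\cap P|$ by $|P|-1$ on fully contained classes and by $|X\cap P|$ on the rest, then exhibiting the matching independent set $I^{*}$---is the standard direct verification and works without issue.
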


The rank function of the parametric matroid of the rough set is investigated through the lower approximation number in the following proposition.

\begin{proposition}
\label{P:rankofaparametricmatroid}
Let $R$ be an equivalence relation on $U$, $X\subseteq U$ and $M_{X}=(U, \mathbf{I}_{X})$ the parametric matroid of the rough set with respect to $X$.
Then for all $Y\subseteq U$,\\
\centerline{$r_{M_{X}}(Y)=|Y|-f_{R}(Y-\underline{R}(X))$.}
\end{proposition}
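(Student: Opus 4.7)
The plan is to exploit the direct-sum decomposition of $M_{X}$ given by Theorem~\ref{T:asubsetfamilysatifiesindependent} together with Proposition~\ref{P:relationshipbetweenrankofdirectsum}, which reduces the computation of $r_{M_{X}}(Y)$ to computing ranks in a free matroid and in a partition-circuit matroid, each applied to the appropriate piece of $Y$.

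First I would write $U = \underline{R}(X) \cup (U - \underline{R}(X))$ as a disjoint union, so that by Theorem~\ref{T:asubsetfamilysatifiesindependent} we have $M_{X} = M_{R\upharpoonright U-\underline{R}(X)} \oplus M$, where $M$ is the free matroid on $\underline{R}(X)$. Applying Proposition~\ref{P:relationshipbetweenrankofdirectsum} to $Y \subseteq U$ gives
\begin{equation*}
r_{M_{X}}(Y) = r_{M_{R\upharpoonright U-\underline{R}(X)}}\bigl(Y \cap (U-\underline{R}(X))\bigr) + r_{M}\bigl(Y \cap \underline{R}(X)\bigr).
\end{equation*}
Since $M$ is free on $\underline{R}(X)$, Definitions~\ref{D:freematroid} and~\ref{D:rank} give $r_{M}(Y \cap \underline{R}(X)) = |Y \cap \underline{R}(X)|$. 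For the other summand, I would note that $Y \cap (U - \underline{R}(X)) = Y - \underline{R}(X)$ and then invoke Proposition~\ref{P:rankofpartitioncircuitmatroid} on the partition-circuit matroid $M_{R\upharpoonright U-\underline{R}(X)}$ to get
\begin{equation*}
r_{M_{R\upharpoonright U-\underline{R}(X)}}(Y - \underline{R}(X)) = |Y - \underline{R}(X)| - f_{R\upharpoonright U-\underline{R}(X)}(Y - \underline{R}(X)).
\end{equation*}

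The one point that needs justification (and is the main obstacle, though a mild one) is that $f_{R\upharpoonright U-\underline{R}(X)}(Y - \underline{R}(X)) = f_{R}(Y - \underline{R}(X))$. This follows from the Remark after the proposition characterizing $X = \underline{R}(X)$: the equivalence classes of $R\upharpoonright U-\underline{R}(X)$ are precisely those $P \in U/R$ with $P \subseteq U - \underline{R}(X)$, and any equivalence class of $R$ contained in $Y - \underline{R}(X) \subseteq U - \underline{R}(X)$ is automatically one of these. Hence Definition~\ref{D:lowerapproximationnumber} yields the same count under either relation.

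Finally, combining the two pieces and using $|Y \cap \underline{R}(X)| + |Y - \underline{R}(X)| = |Y|$ gives
\begin{equation*}
r_{M_{X}}(Y) = |Y \cap \underline{R}(X)| + |Y - \underline{R}(X)| - f_{R}(Y - \underline{R}(X)) = |Y| - f_{R}(Y - \underline{R}(X)),
\end{equation*}
completing the proof. The entire argument is essentially bookkeeping once the direct-sum structure is in hand; no new combinatorial insight beyond what is already established in Theorem~\ref{T:asubsetfamilysatifiesindependent} and Propositions~\ref{P:relationshipbetweenrankofdirectsum} and~\ref{P:rankofpartitioncircuitmatroid} is needed.
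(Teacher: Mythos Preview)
Your proposal is correct and follows essentially the same route as the paper: decompose $M_{X}$ as the direct sum of the partition-circuit matroid on $U-\underline{R}(X)$ and the free matroid on $\underline{R}(X)$ via Theorem~\ref{T:asubsetfamilysatifiesindependent}, apply Proposition~\ref{P:relationshipbetweenrankofdirectsum} to split the rank, evaluate each piece using Proposition~\ref{P:rankofpartitioncircuitmatroid} and the trivial rank of a free matroid, and then identify $f_{R\upharpoonright U-\underline{R}(X)}(Y-\underline{R}(X))$ with $f_{R}(Y-\underline{R}(X))$. Your justification of this last identification via the Remark is slightly more explicit than the paper's appeal to Definitions~\ref{D:lowerandupper} and~\ref{D:lowerapproximationnumber}, but the content is the same.
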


\begin{proof}
According to Theorem~\ref{T:asubsetfamilysatifiesindependent}, $M_{X}=M_{R\upharpoonright U-\underline{R}(X)}\oplus M$, where $M_{R\upharpoonright U-\underline{R}(X)}=(U-\underline{R}(X), \mathbf{I}_{R\upharpoonright U-\underline{R}(X)})$ is a partition-circuit matroid and $M=(\underline{R}(X), \mathbf{I})$ is a free matroid.
According to Proposition~\ref{P:relationshipbetweenrankofdirectsum} and Proposition~\ref{P:rankofpartitioncircuitmatroid}, $r_{M_{X}}(Y)=r_{M_{R\upharpoonright U-\underline{R}(X)}}(Y\cap (U-\underline{R}(X)))+r_{M}(Y\cap\underline{R}(X))=r_{M_{R\upharpoonright U-\underline{R}(X)}}(Y-\underline{R}(X))+r_{M}(Y\cap\underline{R}(X))=|Y-\underline{R}(X)|-f_{R\upharpoonright U-\underline{R}(X)}(Y-\underline{R}(X))+|Y\cap\underline{R}(X)|=|Y|-f_{R\upharpoonright U-\underline{R}(X)}(Y-\underline{R}(X))$.
Since $Y-\underline{R}(X)\subseteq U-\underline{R}(X)$, according to Definition~\ref{D:lowerandupper} and Definition~\ref{D:lowerapproximationnumber}, then $f_{R\upharpoonright U-\underline{R}(X)}(Y-\underline{R}(X))=f_{R}(Y-\underline{R}(X))$, therefore, $r_{M_{X}}(Y)=|Y|-f_{R}(Y-\underline{R}(X))$.
\end{proof}

In a matroid, the closure of a subset is all those elements when added to the subset, the rank is the same.
The rank function of a parametric matroid can be expressed by the lower approximation number.
Moreover, we use the lower approximation number to study the closure operator of a parametric matroid.

\begin{proposition}
Let $R$ be an equivalence relation on $U$, $X\subseteq U$ and $M_{X}=(U, \mathbf{I}_{X})$ the parametric matroid of the rough set with respect to $X$.
Then for all $Y\subseteq U$,\\
\centerline{$cl_{M_{X}}(Y)=Y\cup\{y\in U-Y:f_{R}(Y\cup\{y\}-\underline{R}(X))- f_{R}(Y-\underline{R}(X))=1\}$.}
\end{proposition}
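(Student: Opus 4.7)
The plan is to unwind the definition of the closure operator and substitute the rank-function formula already established in Proposition~\ref{P:rankofaparametricmatroid}. By Definition~\ref{D:closure}, $cl_{M_{X}}(Y)=\{u\in U:r_{M_{X}}(Y)=r_{M_{X}}(Y\cup\{u\})\}$, and it is immediate from this definition (since the rank is monotone and $Y\subseteq Y\cup\{u\}$ forces equality when $u\in Y$) that $Y\subseteq cl_{M_{X}}(Y)$. So the real content is to characterize which elements $u\in U-Y$ satisfy $r_{M_{X}}(Y)=r_{M_{X}}(Y\cup\{u\})$.

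Next I would plug in Proposition~\ref{P:rankofaparametricmatroid} twice, obtaining
\[
r_{M_{X}}(Y\cup\{u\})-r_{M_{X}}(Y)=\bigl(|Y\cup\{u\}|-|Y|\bigr)-\bigl(f_{R}(Y\cup\{u\}-\underline{R}(X))-f_{R}(Y-\underline{R}(X))\bigr).
\]
For $u\in U-Y$ the first bracket is exactly $1$, so the equality $r_{M_{X}}(Y)=r_{M_{X}}(Y\cup\{u\})$ is equivalent to
\[
f_{R}(Y\cup\{u\}-\underline{R}(X))-f_{R}(Y-\underline{R}(X))=1,
\]
which is precisely the defining condition of the right-hand side of the claimed identity. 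Combining the trivial inclusion $Y\subseteq cl_{M_{X}}(Y)$ with this characterization on $U-Y$ yields the formula.

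There is essentially no obstacle beyond a careful bookkeeping check: one should verify that $f_{R}(Y\cup\{u\}-\underline{R}(X))-f_{R}(Y-\underline{R}(X))\in\{0,1\}$ whenever $u\in U-Y$ (adding a single point to a set either leaves the number of contained equivalence classes unchanged or increases it by exactly one, namely when that point completes a new block $P_{u}\in U/R$ inside $Y\cup\{u\}-\underline{R}(X)$), so that the dichotomy in the rank difference truly reduces to the stated equation. Given this observation, the proof is a one-line substitution into the formula of Proposition~\ref{P:rankofaparametricmatroid}, and I would present it as such.
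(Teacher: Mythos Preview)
Your proposal is correct and follows essentially the same route as the paper: unwind Definition~\ref{D:closure}, note that $Y\subseteq cl_{M_{X}}(Y)$ trivially, and for $y\in U-Y$ substitute the rank formula of Proposition~\ref{P:rankofaparametricmatroid} to reduce the equality of ranks to the stated condition on $f_{R}$. The only addition in your write-up is the sanity check that the $f_{R}$-difference lies in $\{0,1\}$, which the paper omits but which does no harm.
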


\begin{proof}
According to Definition~\ref{D:closure}, $cl_{M_{X}}(Y)=\{y\in U:r_{M_{X}}(Y)=r_{M_{X}}(Y\cup y)\}$.
If $y\in Y$, then $r_{M_{X}}(Y)=r_{M_{X}}(Y\cup y)$, hence $cl_{M_{X}}(Y)=Y\cup\{y\in U-Y:r_{M_{X}}(Y)=r_{M_{X}}(Y\cup y)\}$.
According to Proposition~\ref{P:rankofaparametricmatroid}, $r_{M_{X}}(Y)=|Y|-f_{R}(Y-\underline{R}(X)), r_{M_{X}}(Y\cup\{y\})=|Y\cup\{y\}|-f_{R}(Y\cup\{y\}-\underline{R}(X))$, if $y\notin Y$, then $r_{M_{X}}(Y)=r_{M_{X}}(Y\cup y)\Leftrightarrow |Y|-f_{R}(Y-\underline{R}(X))=|Y|+1-f_{R}(Y\cup\{y\}-\underline{R}(X))\Leftrightarrow f_{R}(Y\cup\{y\}-\underline{R}(X))-f_{R}(Y-\underline{R}(X))=1$.
To sum up, this completes the proof.
\end{proof}


\section{Conclusions}
\label{S:conclusions}
In this paper, for a universe and an equivalence relation on the universe, we proposed a parametric matroid of the rough set through defining a parametric set family based on the lower approximation operator.
Some equivalent forms of the parametric set family were obtained.
Moreover, we proved the parametric matroid of the rough set to be the direct sum of a partition-circuit matroid and a free matroid.
Through the lower approximation number, some characteristics of the parametric matroid of the rough set, such as independent sets, bases, circuits, the rank function and the closure operator, were well represented.
In future works, we will extend equivalence relations/partitions to arbitrary relations/coverings to connect matroids with generalized rough sets.

\section*{Acknowledgments}
This work is supported in part by the National Natural Science Foundation of China under Grant No. 61170128, the Natural Science Foundation of Fujian Province, China, under Grant Nos. 2011J01374 and 2012J01294, the Science and Technology Key Project of Fujian Province, China, under Grant No. 2012H0043 and State key laboratory of management and control for complex systems open project under Grant No. 20110106.


\end{document}